\documentclass[sigconf, nonacm]{acmart}

\newcommand\vldbauthors{\authors}
\newcommand\vldbtitle{\shorttitle} 
\newcommand\vldbpagestyle{plain} 
\usepackage{multirow}
\usepackage{booktabs}
\usepackage{graphicx}
\usepackage{stfloats}
\usepackage{multicol}
\usepackage{bm}
\usepackage{enumitem}
\usepackage{makecell}
\usepackage[ruled,linesnumbered,vlined]{algorithm2e}
\usepackage{amsthm}        

\newtheorem{theorem}{Theorem}

\newtheorem{definition}{Definition}

\begin{document}
\title{\texttt{FEAT}: A Linear-Complexity Foundation Model for Extremely Large Structured Data}

\author{
  Zhenghang Song$^1$, Tang Qian$^2$, Lu Chen$^{1*}$, Yushuai Li$^{3*}$, Zhengke Hu$^2$, \\
  Bingbing Fang$^2$, Yumeng Song$^3$, Junbo Zhao$^{2*}$, Sheng Zhang$^2$, Tianyi Li$^3$
}
\affiliation{%
  \institution{
    $^1$Zhejiang University \quad
    $^2$Ant Group \quad
    $^3$Aalborg University
  }
  \country{}
}
\email{ {22451060, luchen}@zju.edu.cn, {yusli, yumengs, tianyi}@cs.aau.dk }
\email{ {qiantang.qt, zhengke.hzk, jianjiu.fbb, zhaojunbo.zjb, sheng.zs}@antgroup.com }
%%
%% The "author" command and its associated commands are used to define the authors and their affiliations.
 %\author{ a}
% \affiliation{%
% \vspace{1cm}
%   \institution{}
   %\city{Hangzhou}
 %  \country{}
% }

% \author{
%     Zhenghang Song \\
%     Zhejiang University \\
%     \texttt{22451060@zju.edu.cn} \\
%     \And
%     Tang Qian \\
%     Ant Group \\
%     \texttt{qiantang.qt@antgroup.com} \\
%     \And
%     Lu Chen* \\
%     Zhejiang University \\
%     \texttt{luchen@zju.edu.cn} \\
%     \And
%     Yushuai Li* \\
%     Aalborg University \\
%     \texttt{yusli@cs.aau.dk} \\
%     \And
%     Zhengke Hu \\
%     Ant Group \\
%     \texttt{zhengke.hzk@antgroup.com} \\
%     \And
%     Bingbing Fang \\
%     Ant Group \\
%     \texttt{jianjiu.fbb@antgroup.com} \\
%     \And
%     Yumeng Song \\
%     Aalborg University \\
%     \texttt{yumengs@cs.aau.dk} \\
%     \And
%     Junbo Zhao \\
%     Ant Group \\
%     \texttt{zhaojunbo.zjb@antgroup.com} \\
%     \And
%     Sheng Zhang \\
%     Ant Group \\
%     \texttt{sheng.zs@antgroup.com} \\
%     \And
%     Tianyi Li \\
%     Aalborg University \\
%     \texttt{tianyi@cs.aau.dk} \\
% }

%%
%% The abstract is a short summary of the work to be presented in the
%% article.
\begin{abstract}

Structured data is widely used in domains such as healthcare, finance, and scientific data management. 
Recent studies on structured data foundation models (SFMs) aim to support data analysis and mining tasks over such data, but still face scalability and generalization challenges when applied to real-world enterprise databases.
First, many SFMs rely on full self-attention, which introduces an $\mathcal{O}(N^2)$ computational bottleneck and limits the number of tuples that can be processed jointly. 
Second, directly replacing attention with linear-complexity sequence models may conflict with the permutation-invariant nature of structured data, introducing artificial order bias and degrading representation quality. 
Moreover, models trained only on synthetic data may struggle to generalize to the heavy-tailed and heterogeneous distributions commonly found in real-world databases.
To address these challenges, we propose \texttt{FEAT}, a linear-complexity foundation model for extremely large structured data. 
\texttt{FEAT} replaces quadratic attention with a multi-layer dual-axis encoding architecture. 
It integrates an adaptive-fusion bidirectional state-space model (AFBM) with convolutional gated linear attention (Conv-GLA), enabling cross-tuple contextualization in $\mathcal{O}(N)$ time while supporting permutation-invariant representation learning. 
To improve robustness under real-world data skewness, \texttt{FEAT} further adopts a hybrid structural causal pre-training pipeline with a robust reconstruction objective. 
Experiments on 12 real-world database benchmarks show that \texttt{FEAT} consistently outperforms representative SFMs on zero-shot tasks and scales linearly with structured-data sample length, achieving up to 50$\times$ faster inference latency.

\end{abstract}

\maketitle
\pagestyle{\vldbpagestyle}
\begingroup\small\noindent\raggedright\textbf{Reference Format:}\\
\vldbauthors. \vldbtitle.  
\endgroup
\begingroup

%%% do not modify the following VLDB block %%
%%% VLDB block start %%%

%%% VLDB block end %%%

%%% do not modify the following VLDB block %%
%%% VLDB block start %%%

%%% VLDB block end %%%

\section{Introduction}

%Structured data forms the bedrock of modern science and industry, housing critical information from genomics to global finance \cite{hollmann2025accurate, borisov2022deep}. {\color{red} XX}

Structured data is information organized under a predefined data model or schema, typically represented as matrices where each row corresponds to a sample and each column represents a specific feature or attribute, and it is widely used in domains such as genomics, healthcare, finance, e-commerce, and scientific data management~\cite{hollmann2025accurate, borisov2022deep}.
Analyzing structured data enables the discovery of complex relational patterns, the prediction of future outcomes, and automated decision-making. It is important for applications such as disease diagnosis, risk assessment, algorithmic trading, and personalized recommendation systems across diverse database environments~\cite{narayan2022can, reis2024generalizable, shaikhha2023demonstration, lin2024gfs}.

%Most existing predictive modeling on structured data \cite{grinsztajn2022tree} relies on tree-based ensembles \cite{breiman2001random, chen2016xgboost, ke2017lightgbm, prokhorenkova2018catboost} or specialized deep networks \cite{arik2021tabnet, gorishniy2021revisiting, somepalli2021saint}.  {\color{red} XXX}

Most machine learning (ML) methods for data analysis and mining on structured data~\cite{grinsztajn2022tree} rely on tree-based ensembles \cite{breiman2001random, chen2016xgboost, ke2017lightgbm, prokhorenkova2018catboost} or specialized deep networks \cite{arik2021tabnet, gorishniy2021revisiting, somepalli2021saint}. They typically require task-specific training with extensive hyperparameter tuning and large labeled datasets, which limits their ability to generalize across datasets and tasks.
Foundation models (FMs)~\cite{bommasani2021opportunities}, based on large-scale pretraining on massive datasets, are designed to generalize across tasks. 
Motivated by the success of FMs in natural language processing and computer vision~\cite{brown2020language, touvron2023llama, dosovitskiy2021image}, applying FMs to structured data analysis has become an important research direction.

Existing structured data FMs (SFMs),  such as TabPFN~\cite{hollmann2025accurate} and LimiX \cite{zhang2025limix},  employ Transformer~\cite{vaswani2017attention} pre-trained on large synthetic structured datasets. They formulate structured data analysis and mining as an in-context learning (ICL) problem~\cite{qu2025tabicl}, enabling models to handle diverse tasks with few-shot inference without dataset-specific retraining.
While these studies demonstrate promising performance on small-scale datasets, they still face three major challenges when deployed on real-world datasets:

%Foundation models \cite{bommasani2021opportunities} are large-scale models trained on massive datasets and designed to generalize across tasks. They have gained strong attention because they significantly improved performance in natural language processing and computer vision \cite{brown2020language, touvron2023llama, dosovitskiy2021image}. Extending this paradigm to structured data has therefore become an important research direction. Foundation models for structured data, also called large structured-data models (SFMs), such as TabPFN \cite{hollmann2025accurate} and LimiX \cite{zhang2025limix}, apply Transformers \cite{vaswani2017attention} pre-trained on synthetic corpora. These models recast structured data prediction as an in-context learning task \cite{qu2025tabicl}, enabling strong few-shot inference without dataset-specific tuning.

% Despite these breakthroughs, a critical ``reality gap'' prohibits their deployment on real-world datasets. Optimized primarily for small-scale, synthetic environments, current SFMs hit a severe computational and architectural wall when forced to process massive, unsequenced, and heavy-tailed empirical matrices. To unlock the true potential of foundation models for real-world structured data, we must overcome three interconnected challenges:

%While existing studies demonstrate promising performance on  {\color{red} XXX}, they still face three major challenges:

\noindent\textbf{\textit{Challenge I: How to reduce the quadratic complexity of cross-sample modeling for massive structured datasets?}} 
To capture the macroscopic distribution of structured data, SFMs rely on full self-attention for cross-sample modeling. However, the inherent quadratic complexity ($\mathcal{O}(N^2)$) of self-attention severely restricts the context window, consistently triggering kernel-error or out-of-memory failures near just 50,000 samples \cite{qu2025tabicl, ma2024tabdpt, grinsztajn2025tabpfn}. While heuristic divide-and-conquer strategies can mitigate this at test time \cite{ye2025closer}, they cannot circumvent the fundamental computational bottleneck during pre-training. 
% Since industrial-grade structured datasets routinely exceed millions of records~\cite{thomas2024retrieval}, this persistent $\mathcal{O}(N^2)$ barrier strictly prohibits existing SFMs from ever observing the true global data distribution. 
Since industrial-grade structured datasets routinely exceed millions of records~\cite{thomas2024retrieval}, this persistent $\mathcal{O}(N^2)$ barrier prevents existing SFMs from capturing the global data distribution and limits their use in real-world applications.
% Therefore, a fundamentally new architectural mechanism is required to break the quadratic bottleneck and scale to massive sample sizes.

\noindent\textbf{\textit{Challenge II: How to maintain expressive representations under linear-complexity modeling on permutation-invariant structured data?}} 
% To bypass the $\mathcal{O}(N^2)$ bottleneck, an intuitive approach is to adopt linear-complexity ($\mathcal{O}(N)$) sequence models, such as State Space Models (SSMs) \cite{gu2024mamba} or Linear Attention \cite{katharopoulos2020transformers}. However, directly transplanting these architectures to structured data precipitously leads to representation collapse \cite{ahamed2024mambatab, koch2025state}. The fundamental disconnect lies in the data topology: standard linear models are inherently designed for sequential data driven by temporal locality and sequential dependencies. Conversely, structured datasets are strictly permutation-invariant---the order of records carries no semantic meaning. 
% When recurrent-like SSMs process unordered tabular samples, their fixed-size hidden states are forced to compress global, non-directional interactions into a rigid computational bottleneck. This inherently imposes a detrimental ``recency bias'' that arbitrarily prioritizes newly observed records while catastrophically forgetting earlier global context, severely degrading the model's expressive capacity. Conversely, while Linear Attention mechanisms avoid strict state compression and better preserve global routing, they critically lack the aggressive dynamic filtering required to suppress the heavy intrinsic noise of structured features, allowing variance to destructively accumulate over massive contexts \cite{yang2024gated}. Consequently, existing linear models are structurally inadequate.
To reduce the quadratic complexity of cross-sample modeling, existing studies typically replace full self-attention with linear-complexity sequence models, such as state space models (SSMs) \cite{gu2024mamba} or linear attention \cite{katharopoulos2020transformers}. These mechanisms process sequences in $\mathcal{O}(N)$ time by maintaining compressed state representations or kernelized attention accumulators.
However, directly applying such linear sequence mechanisms to structured data often leads to representation collapse \cite{ahamed2024mambatab, koch2025state}. Unlike sequential text, samples in structured datasets are permutation-invariant and typically contain substantial noise. Existing linear architectures face a structural trade-off when applied in isolation. SSMs perform effective local filtering but compress unordered global interactions into a fixed hidden state, leading to semantic information loss. In contrast, linear attention preserves global feature interactions but lacks strong local filtering, allowing noise variance to accumulate across long structured contexts \cite{yang2024gated}. 

% Consequently, a single homogeneous linear architecture cannot simultaneously suppress tabular noise and preserve global expressive capacity.

\noindent\textbf{\textit{Challenge III: How to ensure stable optimization under heavy-tailed real-world structured data?}} 
Most existing SFMs pre-train on synthetic datasets under strict i.i.d. assumptions, relying on static loss functions like standard MSE \cite{wang2022t, zhang2025limix, grinsztajn2025tabpfn}. However, real-world structured data is inherently heteroscedastic and heavy-tailed, populated by extreme outliers \cite{shwartz2022, borisov2022deep}. When exposed to this empirical variance, static optimization objectives become hypersensitive, frequently triggering gradient explosions. This fragility is severely exacerbated during large-scale, dynamic multi-task pre-training. As batch task proportions fluctuate wildly and targets are frequently missing, computing dense gradients with static weights inevitably leads to mathematical instabilities, such as division-by-zero errors and total optimization collapse. 
% Consequently, scaling SFMs to real-world environments strictly requires an outlier-robust, dynamically balanced optimization mechanism.

%To tackle these challenges, we propose \textbf{FEAT}, a linear-complexity ($\mathcal{O}(N)$) foundation model natively designed for massive, real-world structured data. 
%{\color{red}To address the first challenge, we propose multi-layer dual-axis encoding to capture XX and XX information and reduce the memory complexity (?). It consists of 3 XXX and 1 xxx. Mamba xxxx Conv-GLA....
%To address the second challenge, XXXX
%To address the third challenge, XXXX}

To tackle these challenges, we propose \texttt{FEAT}, a linear-complexity FM for massive, real-world structured data. 
%To address the first challenge, we design the sample-axis modeling stage using linear-complexity sequence operators, enabling cross-sample modeling with $\mathcal{O}(N)$ time and memory complexity. This design removes the quadratic dependency introduced by full self-attention and allows the model to scale to large structured datasets.
To address the first challenge, we design linear-complexity encoders for representation learning to propagate information across data samples. The model transmits contextual signals through fixed-size hidden states and attention accumulators, avoiding the construction of an $N \times N$ attention matrix and eliminating quadratic pairwise interactions.
% To address the first challenge, we redesign the sample-axis modeling stage using a sequence-based mathematical formulation that avoids constructing an $N \times N$ attention matrix. By processing samples iteratively through fixed-size hidden state representations and attention accumulators, the computational cost and memory footprint are bounded to scale linearly with the number of input records. This architectural shift naturally achieves $\mathcal{O}(N)$ complexity, allowing \texttt{FEAT} to ingest and process massive structured datasets.
To address the second challenge, we propose a multi-layer dual-axis encoding architecture that combines two complementary encoding layers: adaptive-fusion bi-Mamba-2 (AFBM), which captures dynamic local dependencies across samples, and convolutional gated linear attention (Conv-GLA), which maintains global interactions through explicit memory accumulation. 
% Specifically, AFBM models permutation-invariant contexts without imposing directional bias, enabling effective local dependency extraction on non-sequential structured data. Conv-GLA provides an explicit global memory that preserves long-range interactions across samples. 
Together, these two encoding layers prevent representation collapse in linear-complexity architectures and maintain expressive semantic representations for massive structured sequences.
To address the third challenge, we propose a stable pre-training pipeline for structured data. The framework combines scale-free synthetic structural causal models (SCMs) with real-world structured datasets and employs a numerically robust Huber-based reconstruction loss. This design improves robustness to heavy-tailed noise and outliers while encouraging the model to learn stable relational patterns in structured data.
Our contributions are summarized as follows:

% First, we replace the $\mathcal{O}(N^2)$ sample-axis attention with a novel heterogeneous linear architecture that interleaves \textbf{Adaptive Bidirectional Mamba-2} layers and \textbf{Convolutional Gated Linear Attention (Conv-GLA)} in a carefully configured 3:1 ratio. The significance of this design is that it uniquely resolves the representation collapse inherent to homogeneous linear models on permutation-invariant data. Secondly, the Mamba-2 blocks act as aggressive dynamic filters to discard heavy tabular noise, while the Conv-GLA layer provides infinite-capacity explicit memory to preserve global macroscopic feature interactions. This synergistic fusion mathematically guarantees $\mathcal{O}(N)$ scaling without sacrificing expressive fidelity. Thirdly, to achieve stable optimization across extreme empirical distributions, we fundamentally redesign the pretraining paradigm. We introduce a hybrid data pipeline blending scale-free Synthetic Causal Models (SCMs) with real-world datasets, optimized via a numerically robust Huber-based reconstruction loss and strict query-mask decoupling. This mechanism ensures that the model successfully resists outlier-induced gradient explosions and is explicitly coerced to learn true causal relational patterns rather than exploiting hypersensitive local statistics. 

\begin{itemize}[leftmargin=3mm,labelsep=2pt]

\vspace{-2mm}
    \item We propose \texttt{FEAT}, the first industrial-grade SFM built on a linear-complexity multi-layer dual-axis architecture, enabling cross-sample modeling with strictly $\mathcal{O}(N)$ complexity and scalable learning on massive structured datasets.
    
    \item We design two complementary encoding layers, AFBM and Conv-GLA, to preserve expressive representations under linear modeling. AFBM captures dynamic local dependencies across samples, while Conv-GLA maintains global interactions through explicit memory accumulation, enabling robust representation learning for permutation-invariant structured data.

    \item We pioneer a mixed real-and-synthetic pretraining strategy fortified by a Huber-based objective. This effectively neutralizes outlier-induced gradient explosions, guaranteeing stable convergence across heteroscedastic, heavy-tailed empirical distributions and bridging the simulation-to-reality gap.
    
    \item Extensive experiments across 12 real-world benchmarks demonstrate that \texttt{FEAT} delivers up to a 50$\times$ inference speedup at extreme context lengths compared to existing foundation models. \texttt{FEAT} achieves this massive computational scalability while maintaining zero-shot predictive parity with state-of-the-art baselines in both classification and regression tasks.
\end{itemize}

We review related work in Section~\ref{sec:related_work} and cover preliminaries in Section~\ref{sec:preliminaries}. Section~\ref{sec:problem_statement} introduces the objective of  \texttt{FEAT}. Section~\ref{sec:methodology} presents \texttt{FEAT}, while Section~\ref{sec:experiment} discusses experimental findings. Section~\ref{sec:conclusion} concludes the paper and outlines research directions.
\vspace{-2mm}
\section{Related Work}
\label{sec:related_work}

\subsection{ML for Structured Data Mining}

ML on structured data is fundamental for predictive analytics within data management systems~\cite{boehm2020systemds}. 
Existing studies primarily fall into two categories: tree-based methods~\cite{breiman2001random, chen2016xgboost, ke2017lightgbm, prokhorenkova2018catboost} and deep learning-based methods~\cite{arik2021tabnet, gorishniy2021revisiting, somepalli2021saint}. 
Tree-based methods address common challenges in structured data, such as heterogeneous feature types, missing values, and unnormalized distributions, through split-based modeling and feature-specific handling. 
For example, CatBoost~\cite{prokhorenkova2018catboost} introduces ordered target encoding to handle high-cardinality categorical features while reducing target leakage. 
Deep learning-based methods aim to learn richer representations and capture complex feature interactions. 
TabNet~\cite{arik2021tabnet} employs sequential attention for instance-wise feature selection, allowing the model to focus on relevant features for each sample. 
SAINT~\cite{somepalli2021saint} uses dual self-attention over both rows and columns to model complex relational contexts among samples and features.

However, both approaches follow a training-centric paradigm, where models are optimized separately for each dataset, requiring dataset-specific optimization, extensive hyperparameter tuning, and multiple iterative data scans~\cite{kumar2015model, schott2022lmfao}. 
Systems such as Alpine Meadow~\cite{shang2020alpine} and Cerebro~\cite{nakandala2020cerebro} further show that configuring optimal ML pipelines for new databases incurs substantial system-level search overhead. 
This limits scalability on large datasets and hinders task-agnostic deployment in data systems~\cite{kramer2020willump}. 
In contrast, \texttt{FEAT} leverages an ICL pre-training framework to capture transferable relational patterns, thereby eliminating dataset-specific training and enabling zero-shot inference on unseen datasets.

% \subsection{Scalable Modeling for Very Large Databases}
% As enterprise databases routinely scale to millions or billions of tuples, handling extremely large sample sizes has become a challenge. Traditional approaches address this massive scale via system-level parallelization, utilizing distributed computing frameworks or histogram-based gradient approximations \cite{ke2017lightgbm} to accelerate table scans. While In-Database ML systems \cite{boehm2020systemds} push computation close to the data to avoid costly data movement, they remain bound to the aforementioned dataset-specific training paradigm.

% More recently, Tabular Foundation Models (e.g., TabPFN \cite{hollmann2025accurate}, LimiX \cite{zhang2025limix}) have attempted to provide zero-shot inference capabilities. However, their reliance on sample-wise full attention imposes an $\mathcal{O}(N^2)$ memory and computational bottleneck. To apply these models to very large databases, practitioners are forced to rely on ad-hoc data reduction techniques—such as random subsampling, mini-batch chunking, or coreset selection. These heuristic workarounds severely truncate the global context, discarding the heavy-tailed empirical noise and rare anomalies inherently present in massive, real-world datasets \cite{ma2024tabdpt}. 

\subsection{Scalable Data Mining on Large Datasets}

Scalable data mining on large datasets aims to support efficient learning and analysis when the number of samples grows to millions or billions. 
A common line of work improves scalability through algorithmic and system-level optimizations. 
LightGBM~\cite{ke2017lightgbm} uses gradient-based sampling and histogram-based approximation to reduce training cost. 
JoinBoost~\cite{huang2023joinboost} trains models directly over normalized relational tables through SQL execution, avoiding expensive data materialization. 
Another line of work relies on data reduction, where methods such as CoreTab~\cite{razmadze2025datamap} construct tabular coresets to accelerate training while preserving key data properties. 
For SFMs, TabPFN~\cite{hollmann2025accurate} and LimiX~\cite{zhang2025limix} have shown strong predictive performance using ICL over structured datasets.

However, existing scalable mining techniques do not directly address the scalability challenges of SFMs. 
Algorithmic and system-level optimizations are mainly designed for task-specific training, while data reduction methods improve efficiency by reducing the number of tuples, which may discard global distributional information. 
Meanwhile, existing SFMs rely on sample-wise attention, leading to an $\mathcal{O}(N^2)$ computational cost with respect to the number of tuples. 
To handle large datasets, they often require sampling or data reduction, which can limit their ability to capture global data characteristics, especially rare patterns and anomalies~\cite{ma2024tabdpt}. 
\texttt{FEAT} replaces full attention with a linear architecture based on AFBM and Conv-GLA, reducing the sample-axis complexity to $\mathcal{O}(N)$ and enabling efficient processing of large relational datasets while preserving global contextual information across tuples.

\vspace{-1mm}

\subsection{Linear-Complexity Sequence Models}

Linear-complexity sequence models aim to reduce the cost of modeling long contexts by avoiding pairwise attention over all samples. 
Linear Transformers~\cite{katharopoulos2020transformers} approximate attention with kernel feature maps, reducing the complexity from $\mathcal{O}(N^2)$ to $\mathcal{O}(N)$. SSMs, such as Mamba~\cite{gu2024mamba}, further compress historical context into fixed-size hidden states through selective state updates.  MambaTab~\cite{ahamed2024mambatab} adapt SSMs to structured data, reducing the memory cost of processing large numbers of samples.

However, directly applying standard sequential models to structured data introduces a representation mismatch. 
Structured data is permutation-invariant, while standard SSMs use a causal and unidirectional bias designed for ordered sequences. 
This mismatch can weaken global dependency modeling and make representations sensitive to artificial row orders and noisy samples. 
\texttt{FEAT} integrates AFBM for bidirectional sample-axis modeling, reducing the causal bias of one-directional sequence processing.
\vspace{-1mm}

\section{Preliminaries}
\label{sec:preliminaries}

\subsection{In-Context Learning for Structured Data} 

\begin{definition}
\label{def:structured_data}
A \textbf{structured dataset} is typically instantiated as a data matrix $\mathbf{X}\in\mathbb{R}^{N\times D}$, 
where $N$ is the number of samples and $D$ is the number of features. 
Each row $\mathbf{x}_i\in\mathbb{R}^{D}$ represents the feature vector 
of the $i$-th sample.
\end{definition}

\begin{definition}
\label{def:structured_label}
Given a structured dataset $\mathbf{X}\in\mathbb{R}^{N\times D}$, 
its \textbf{label vector} is denoted as $\mathbf{y}\in\mathcal{Y}^{N}$, 
where $\mathcal{Y}$ represents the label space and each element 
$y_i \in \mathcal{Y}$ corresponds to the target label of sample $\mathbf{x}_i$.
\end{definition}

The label space $\mathcal{Y}$ depends on the specific task type. 
For example, in classification tasks, $\mathcal{Y}$ is a finite set of discrete class labels, 
while in regression tasks, $\mathcal{Y}$ is a continuous subset of $\mathbb{R}$.

\begin{definition}
\label{def:icl_structured}
Given a structured dataset $\mathbf{X}\in\mathbb{R}^{N\times D}$ and its label vector $\mathbf{y}\in\mathcal{Y}^{N}$, 
let $\mathbf{X}_{L}\subset\mathbf{X}$ denote the subset of samples with observed labels 
$\mathbf{y}_{L}\subset\mathbf{y}$, and let $\mathbf{X}_{U}=\mathbf{X}\setminus\mathbf{X}_{L}$ 
denote the subset of query samples with unobserved labels $\mathbf{y}_{U}$. 
\textbf{ICL for structured data} learns a foundation model parameterized by $\theta$ to predict the label distribution of 
$\mathbf{X}_{U}$ conditioned on $\mathbf{X}_{U}$, $\mathbf{X}_{L}$, and $\mathbf{y}_{L}$ by approximating the true posterior predictive distribution:
\begin{equation}
P_{\theta}(\mathbf{y}_{U}\mid \mathbf{X}_{U}, \mathbf{X}_{L}, \mathbf{y}_{L})
\approx
P(\mathbf{y}_{U}\mid \mathbf{X}_{U}, \mathbf{X}_{L}, \mathbf{y}_{L}),
\label{eq:icl_objective}
\end{equation}
where $P_{\theta}(\cdot)$ denotes the predictive distribution induced by the model.
\end{definition}

At inference time, the labeled samples $(\mathbf{X}_{L},\mathbf{y}_{L})$ serve as the in-context support set, 
while $\mathbf{X}_{U}$ represents the query set. 
Through the ICL mechanism, the model directly infers the label distribution of $\mathbf{X}_{U}$ based on the relational patterns observed in the support samples. Since the predictive model is pre-trained to approximate the universal posterior predictive distribution, it can dynamically adapt to novel structured prediction tasks (e.g., varying feature dimensions or target types) using only the provided context, enabling seamless few-shot task adaptation while requiring absolutely no parameter updates or dataset-specific fine-tuning.

\subsection{State Space Models} 

SSMs~\cite{gu2024mamba} are a class of linear-complexity sequence models that capture temporal or spatial dependencies through a continuous latent state representation. 
Let $t \in \mathbb{R}$ denote the continuous time (or sequence) index. Given a 1-D input signal $u_t \in \mathbb{R}$, an SSM generates an output $y_t \in \mathbb{R}$ by evolving a hidden state $\mathbf{h}_t \in \mathbb{R}^{d_{state}}$ over time, where $d_{state}$ denotes the internal state capacity. 
The state evolution is governed by a continuous-time linear ordinary differential equation (ODE):
\begin{equation}
\mathbf{h}'_t = \mathbf{A}\mathbf{h}_t + \mathbf{B}u_t, 
\quad
y_t = \mathbf{C}\mathbf{h}'_t,
\label{eq:ssm_continuous}
\end{equation}
where $\mathbf{A} \in \mathbb{R}^{d_{state} \times d_{state}}$ is the state transition matrix that dictates the system's internal dynamics, $\mathbf{B} \in \mathbb{R}^{d_{state} \times 1}$ projects the input signal into the high-dimensional latent space, and $\mathbf{C} \in \mathbb{R}^{1 \times d_{state}}$ maps the latent state back to the output space.

To enable efficient computation on discrete hardware, the continuous time ODE must be discretized using a timescale parameter $\Delta$ (typically via the zero-order hold assumption). This transforms the continuous dynamics into a discrete-time recurrent sequence:
\begin{equation}
\mathbf{h}_t = \mathbf{\bar{A}} \mathbf{h}_{t-1} + \mathbf{\bar{B}} u_t, 
\quad 
y_t = \mathbf{C}\mathbf{h}_t,
\label{eq:ssm_discrete}
\end{equation}
where $\mathbf{h}_t$ denotes the latent state at discrete step $t$, and $\mathbf{\bar{A}}$ and $\mathbf{\bar{B}}$ are the discretized matrices derived from $\mathbf{A}$, $\mathbf{B}$, and $\Delta$. 

This recurrent formulation allows SSMs to process sequences with $\mathcal{O}(N)$ linear time complexity and constant memory. However, standard SSMs based on Eq.~\ref{eq:ssm_discrete} are inherently unidirectional and causal, which presents a fundamental structural challenge when applied to permutation-invariant structured data.
\section{Problem Statement}
\label{sec:problem_statement}

% In this work, we aim to design a linear-complexity Tabular Foundation Model capable of performing zero-shot ICL on sequences of massive length. We formalize this problem by first defining the inference objective, followed by an analysis of the dual computational and representational bottlenecks that currently prohibit infinite-context scaling.

% \subsection{The In-Context Inference Objective}
% Given a context set of $N_{ctx}$ labeled tabular samples, $\mathcal{D}_{ctx} = \{(\mathbf{x}_i, y_i)\}_{i=1}^{N_{ctx}}$, and a set of $M$ query samples $\mathbf{X}_{q} = \{\mathbf{x}_{N_{ctx}+j}\}_{j=1}^M$, our fundamental objective is to approximate the true posterior distribution of the query labels $\mathbf{Y}_q$ conditioned on the context. Following the Bayesian learning paradigm \cite{hollmann2025accurate, zhang2025limix}, we parameterize this predictive distribution using a neural network $f_{\theta}$:
% \begin{equation}
%     P_{\theta}(\mathbf{Y}_q \mid \mathbf{X}_q, \mathcal{D}_{ctx}) \approx P(\mathbf{Y}_q \mid \mathbf{X}_q, \mathcal{D}_{ctx})
%     \label{eq:icl_objective}
% \end{equation}
% Crucially, under the ICL framework, the model parameters $\theta$ remain frozen during inference. The network must dynamically map the joint sequence of context and queries into an implicit representation space to deduce task-specific patterns and feature relationships without gradient updates.

\subsection{The Dual Bottlenecks of SFMs}

Existing SFMs~\cite{hollmann2025accurate, zhang2025limix,qu2025tabicl} follow the ICL paradigm defined in Def.~\ref{def:icl_structured}. 
Given a dataset $\mathbf{X}\in\mathbb{R}^{N\times D}$ and partially observed labels $\mathbf{y}$, the model takes labeled samples $(\mathbf{X}_{L},\mathbf{y}_{L})$ as context and predicts labels for $\mathbf{X}_{U}$. 
To support context-conditioned reasoning, each feature value in a sample $\mathbf{x}_i$ is combined with its column information and mapped to a $d$-dimensional embedding, transforming the original matrix into a tensor $\mathcal{X}\in\mathbb{R}^{N\times D\times d}$. 
Most models then apply full self-attention across the sample dimension to capture cross-sample interactions, which introduces a quadratic cost $\mathcal{O}(N^2 \cdot D \cdot d)$ and limits scalability to large datasets. 
Replacing attention with linear models such as SSMs reduces the complexity to $\mathcal{O}(N)$, but directly applying them to structured data leads to representation challenges.

\noindent\textbf{Bottleneck 1: The Linear Trap in Structured Data.} 
Linear sequence models summarize all historical information into a fixed-size hidden state $\mathbf{h}_i \in \mathbb{R}^{d_{state}}$. For long structured sequences with heterogeneous samples and high noise levels, this state representation forces the model to compress large amounts of information into a limited capacity. 
As the sequence grows, useful signals are gradually diluted by noise, leading to progressive information decay and a severe degradation of the signal-to-noise ratio. 

\noindent\textbf{Bottleneck 2: Causal Mask Deficit in Structured Data.} 
Unlike tokens in natural language, samples in a structured dataset do not possess an intrinsic temporal ordering, and their representation should therefore be permutation-invariant. 
Sequential SSMs, however, process inputs in a strictly ordered manner and implicitly impose a lower-triangular causal mask over the sequence. 
When structured samples are arranged in an arbitrary order, this artificial causality restricts information flow and prevents early-indexed samples from accessing forward contextual information. 
As a result, the model cannot fully capture global relationships across samples, leading to biased and sub-optimal representations.

\subsection{Objective of \texttt{FEAT}}

% Our primary architectural challenge is to design a linear mapping function $f_{\theta}$ that simultaneously breaks the $\mathcal{O}(N^2)$ memory wall while fundamentally circumventing both the Linear Trap and the Causal Mask Deficit.

\texttt{FEAT} aims to learn a zero-shot in-context FM with linear complexity for massive structured sequences. 
Its objective is to learn a mapping function $f_{\theta}$ that predicts the label distribution of unlabeled samples from labeled in-context examples, while requiring no task-specific parameter updates during inference.
Different from existing Transformer-based SFMs~\cite{hollmann2025accurate, zhang2025limix}, \texttt{FEAT} is developed to break the $\mathcal{O}_{attn} = \mathcal{O}(N^2 \cdot D \cdot d)$ memory wall induced by full self-attention. 
Different from standard linear sequence models, \texttt{FEAT} is further designed to avoid two key bottlenecks in structured sequence modeling: excessive hidden-state compression, which limits long-range information preservation, and artificial unidirectional ordering, which violates the permutation-invariant nature of structured data.
In this way, \texttt{FEAT} seeks to provide a zero-shot ICL model that scales linearly with the number of samples, preserves global cross-sample dependencies, and supports multiple downstream tabular tasks.
\section{FEAT}
\label{sec:methodology}

\begin{figure*}[t]
  \centering
  % Note: Ensure you upload 'architecture.pdf' to a 'figure' folder in Overleaf, 
  % or replace the path with your actual image file.
  \includegraphics[width=1.0\textwidth]{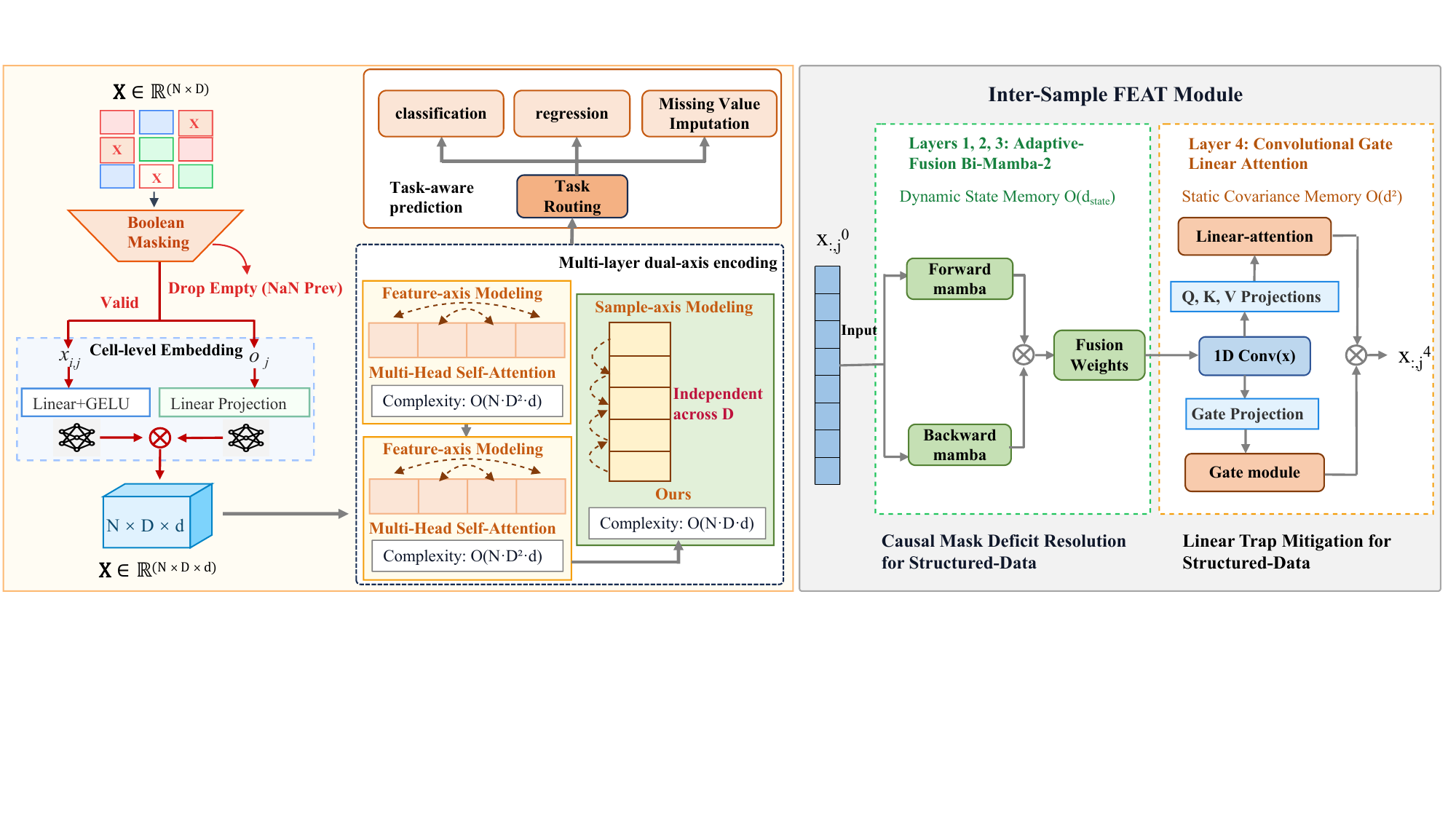}
  \vspace{-5mm}
  \caption{The overview of \texttt{FEAT}.}
  \label{fig:framework}
  \vspace{-5mm}
\end{figure*}

\subsection{Overview of \texttt{FEAT}}
In Fig.~\ref{fig:framework}, \texttt{FEAT} consists of three main components: cell-level embedding, multi-layer dual-axis encoding, and task-aware prediction.

\noindent \textit{\textbf{Cell-level embedding.}} 
Given a structured dataset $\mathbf{X}\in\mathbb{R}^{N\times D}$, \texttt{FEAT} first projects each data cell into a dense embedding space and converts the raw input into a 3D tensor $\mathcal{X}^{(0)} \in \mathbb{R}^{N \times D \times d}$, where $N$ is the number of samples, $D$ is the number of features, and $d$ is the embedding dimension. 
This representation preserves both the sample-wise and feature-wise organization of structured data, serving as the foundation for subsequent dual-axis representation learning. 
The detailed embedding process is described in Section~\ref{sec:embedding}.

\noindent \textit{\textbf{Multi-layer dual-axis encoding.}} 
The tensor $\mathcal{X}^{(0)}$ is processed by $L$ stacked \texttt{FEAT} blocks, each of which decomposes structured-data representation learning into two sequential modeling stages: \textit{Feature-axis modeling} and \textit{Sample-axis modeling}. 
The feature-axis modeling stage models intra-sample feature dependencies by applying self-attention across the $D$ feature columns of each sample, efficiently capturing local semantic correlations. 
The sample-axis modeling stage models inter-sample dependencies along the sample dimension using a linear-complexity mechanism, enabling efficient long-range cross-sample interaction modeling. 
The architecture of the dual-axis encoding blocks is presented in Section~\ref{sec:feat_block}.

\noindent \textit{\textbf{Task-aware prediction.}} 
After passing through the stacked \texttt{FEAT} blocks, the resulting contextualized representations are used to predict the labels of unlabeled samples conditioned on the labeled in-context examples. 
This design enables \texttt{FEAT} to perform zero-shot ICL across multiple structured-data prediction tasks, including classification, regression and missing value imputation. 
The task prediction mechanism is described in Section~\ref{sec:prediction}.

\noindent \textit{\textbf{Pre-training.}}
To enable zero-shot generalization across diverse structured-data tasks, \texttt{FEAT} is pre-trained on a large corpus of synthetic structured datasets generated by an advanced hybrid SCM pipeline. 
During training, \texttt{FEAT} follows a context-conditional masked modeling paradigm and jointly optimizes feature reconstruction and task prediction objectives. 
The detailed pre-training procedure is described in Section~\ref{sec:pretraining}.

% \subsection{Architecture}

% \subsubsection{Overall Architecture of the Janus Framework}
% \label{subsec:overall_arch}

% Janus projects the raw sequence matrix via cell-level embedding into the dense 3D tensor $\mathcal{X}^{(0)} \in \mathbb{R}^{N \times D \times d}$. To efficiently capture both fine-grained feature dependencies and global sequence contexts, Janus routes this tensor through a bidirectional heterogeneous dual-axis architecture. 

% The backbone consists of $L$ sequential hybrid blocks. Each block strictly decouples the spatial routing into two alternating phases:

% \textbf{1. Feature-Axis Routing (Intra-Sample):} For a given sample $i$, standard multi-head self-attention is applied across its $D$ feature columns. Since $D$ is typically small and fixed, this $\mathcal{O}(N \cdot D^2 \cdot d)$ operation safely and efficiently extracts local semantic correlations.
    
% \textbf{2. Sample-Axis Routing (Inter-Sample Janus Block):} To break the $\mathcal{O}(N^2)$ barrier across samples, the tensor is transposed, and each feature column $j$ is routed independently through our \textbf{Janus module}. The Janus module fundamentally redesigns sequence modeling by sequentially bridging bounded dynamic state modeling (Bi-Mamba-2) with infinite-capacity static memory (Conv-GLA).

\subsection{Cell-level Embedding}~\label{sec:embedding}
Unlike natural language, structured data $\mathbf{X}\in\mathbb{R}^{N\times D}$ is characterized by strict permutation invariance along both the sample axis and the feature axis. Flattening such data into 1D sequences intrinsically corrupts this structural independence. To process this natively, \texttt{FEAT} transforms raw data cells $x_{i,j}$ into a 3D contextual tensor $\mathcal{X}^{(0)} \in \mathbb{R}^{N \times D \times d}$ using two independent, additive embeddings: a value projection and a column-axis feature identifier.

% \noindent\textbf{Value Projection.}
First, we uniquely map continuous scalar values to a $d$-dimensional semantic space. The structured data is encoded via a non-linear MLP, while missing or intentionally masked values are substituted with a shared learnable token. Formally, the embedding $\mathbf{e}^{val}_{i,j}$ for the cell at row $i$ and column $j$ in $\mathbf{X}$ is computed as:
\begin{equation}
    \mathbf{e}^{val}_{i,j} = 
    \begin{cases}
        \text{MLP}(x_{i,j}), & \text{if } x_{i,j} \text{ is observed}, \\
        \mathbf{m}_{mask}, & \text{if } x_{i,j} \text{ is missing or masked}.
    \end{cases}
\end{equation}
Where $MLP(\cdot)$ is the linear MLP model and  $\mathbf{m}_{mask} \in \mathbb{R}^{d}$ is the shared learnable token.

% \noindent\textbf{Column-Axis Discriminative Feature Encoding (DFE).}
Then, we apply standard positional encodings across the feature dimension to impose a false geometric distance bias. To strictly preserve column permutation invariance, \texttt{FEAT} introduces subspace orthogonal discriminative feature encoding (S-DFE). Instead of learning static positional mappings, S-DFE dynamically provides equidistant feature identities.

In each forward pass, we randomly sample a low-rank orthogonal matrix $\mathbf{O} \in \mathbb{R}^{D \times \frac{d}{4}}$, where each row vector $\mathbf{o}_j \in \mathbb{R}^{\frac{d}{4}}$ serves as the base identity for the $j$-th column. The pairwise orthogonality ($\mathbf{o}_j^\top \mathbf{o}_k = 0$ for $j \neq k$) guarantees that every feature is distinctly identifiable yet equidistant in the subspace, eliminating associative ordering bias. This identity signature is linearly projected to the hidden dimension $d$ via a learnable weight matrix $\mathbf{W}_{dfe} \in \mathbb{R}^{\frac{d}{4} \times d}$:
\begin{equation}
    \mathbf{loc}^{col}_j = \mathbf{o}_{j}\mathbf{W}_{dfe} 
\end{equation}
This dynamic strategy allows \texttt{FEAT} to effectively model columns purely based on their distinct identities, achieving inherent zero-shot adaptability on unseen structured datasets with arbitrary schemas.
The final cell representation $\mathcal{X}^{(0)}_{i,j}$ is obtained by element-wise aggregating the data projection and the column-axis identifier, followed by layer normalization for stable optimization:
\begin{equation}
    \mathcal{X}^{(0)}_{i,j} = \text{LayerNorm} \left( \mathbf{e}^{val}_{i,j}  + \mathbf{loc}^{col}_j \right)
\end{equation}
This fusion yields a 3D matrix $\mathcal{X}^{(0)}$, which serves as the foundation for the dual-axis modeling blocks.

\subsection{Multi-layer dual-axis encoding}~\label{sec:feat_block}
Following the initial embedding stage, the 3D tensor $\mathcal{X}^{(0)} \in \mathbb{R}^{N \times D \times d}$ is processed by $L$ stacked dual-axis encoding blocks. Each encoding block consists of two feature axis modeling and one sample axis modeling. To effectively capture both local attribute correlations and global context without suffering from computationally prohibitive dense attention over the flattened $N \times D$ sequence, each block $l \in \{1, \dots, L\}$ factorizes the representation learning into two orthogonal stages: Feature-axis modeling and Sample-axis modeling.

\subsubsection{Feature-axis Modeling}

The primary objective of the feature-axis modeling stage is to model the intra-sample dependencies (i.e., interactions among different features within the same sample). Because features in structured data often exhibit complex, non-linear correlations, we employ multi-head self-attention (MHSA) across the feature dimension $D$.

For the $l$-th block, given the input tensor $\mathcal{X}^{(l-1)} \in \mathbb{R}^{N \times D \times d}$, we strictly isolate the computations across the sample axis ($N$). For each independent sample index $i \in \{1, \dots, N\}$, we extract its corresponding 2D feature matrix $\mathbf{F}_i^{(l-1)} = \mathcal{X}^{(l-1)}_{i, :, :} \in \mathbb{R}^{D \times d}$. The routing is then defined by standard Transformer layer operations—comprising MHSA and a Feed-Forward Network (FFN), interleaved with Layer Normalization (LN) and residual connections:
\begin{align}
    \tilde{\mathbf{F}}_i^{(l-1)} &= \mathbf{F}_i^{(l-1)} + \text{MHSA}\Big(\text{LN}\big(\mathbf{F}_i^{(l-1)}\big)\Big), \\
    \hat{\mathbf{F}}_i^{(l-1)} &= \tilde{\mathbf{F}}_i^{(l-1)} + \text{FFN}\Big(\text{LN}\big(\tilde{\mathbf{F}}_i^{(l-1)}\big)\Big).
\end{align}
The operation is strictly constrained within the $D$ tokens of sample $i$. The MHSA mechanism aggregates information from all $D$ columns to contextualize each specific feature representation, seamlessly updating the value embeddings based on the S-DFE identities introduced in the previous stage. 

Finally, the independently processed matrices $\hat{\mathbf{F}}_i^{(l-1)}$ for all $N$ samples are stacked back together to form the intermediate 3D tensor $\hat{\mathcal{X}}^{(l-1)} \in \mathbb{R}^{N \times D \times d}$, which serves as the direct input to the Sample-axis routing stage. 

\subsubsection{Sample-Axis Modeling}

To simultaneously address the \textit{linear trap} and the \textit{causal mask deficit} while maintaining linear computational complexity, \texttt{FEAT} introduces a heterogeneous sequence modeling architecture along the sample dimension. 
Instead of relying on a single sequential module, \texttt{FEAT} integrates two complementary mechanisms: AFBM which captures dynamic local dependencies across samples, and Conv-GLA which provides stable global memory over arbitrarily long sequences.

Specifically, the sample-axis modeling stage consists of four continuous layers arranged in a fixed topology: three AFBM layers ($l \in \{1,2,3\}$) followed by one Conv-GLA layer ($l = 4$). 
The AFBM layers perform efficient bidirectional sequence modeling to extract local inter-sample dynamics while mitigating the causal mask deficit of standard SSMs. 
However, since state-space models inherently compress historical information into bounded hidden states, they remain susceptible to the linear trap when the sequence length becomes extremely large. 
To alleviate this limitation, the Conv-GLA layer introduces an explicit global memory reservoir that stores long-range dependencies without forcing excessive compression into the hidden state.

\noindent\textbf{AFBM.}  For an input representation $\hat{\mathcal{X}}_{i,j}^{(l-1)} \in \mathbb{R}^{d}$, the model computes forward and backward state transitions:
\begin{align}
    \overrightarrow{\mathbf{h}}_{i,j}^{(l)} &= \mathbf{\bar{A}}^{(l)} \overrightarrow{\mathbf{h}}_{i-1,j}^{(l)} + \mathbf{\bar{B}}^{(l)} \hat{\mathcal{X}}_{i,j}^{(l-1)}, \\
    \overleftarrow{\mathbf{h}}_{i,j}^{(l)} &= \text{Mamba-Bwd}\!\left(\hat{\mathcal{X}}_{i,j}^{(l-1)}\right),
\end{align}
where $\overrightarrow{\mathbf{h}}_{i,j}^{(l)}, \overleftarrow{\mathbf{h}}_{i,j}^{(l)} \in \mathbb{R}^{d_{state}}$ represent the forward and backward hidden states, respectively, for the $j$-th feature of the $i$-th sample at the $l$-th layer. $\mathbf{\bar{A}}^{(l)} \in \mathbb{R}^{d_{state} \times d_{state}}$ and $\mathbf{\bar{B}}^{(l)} \in \mathbb{R}^{d_{state} \times d}$ denote the discretized state transition and input projection matrices derived from the continuous SSM dynamics. $\text{Mamba-Bwd}(\cdot)$ indicates the identical Mamba-2 state-space operation computed in the reverse sequence order (from sample $i=N$ down to $1$).

The two directional states are fused via a learnable projection:
\begin{equation}
    \mathcal{X}_{i,j}^{(l)} =
    \mathbf{W}_{fuse}^{(l)}
    \left[
    \overrightarrow{\mathbf{h}}_{i,j}^{(l)}
    \parallel
    \overleftarrow{\mathbf{h}}_{i,j}^{(l)}
    \right],
\end{equation}
where $\mathcal{X}_{i,j}^{(l)} \in \mathbb{R}^d$ is the fused contextual representation. The operator $[\cdot \parallel \cdot]$ denotes vector concatenation along the hidden state dimension, yielding a bidirectional state in $\mathbb{R}^{2d_{state}}$. $\mathbf{W}_{fuse}^{(l)} \in \mathbb{R}^{d \times 2d_{state}}$ is the learnable weight matrix responsible for projecting the concatenated state back to the original embedding dimension $d$.

This bidirectional fusion removes the artificial causality imposed by standard SSMs and enables permutation-aware feature extraction across samples. 
Nevertheless, since AFBM still relies on bounded hidden states of size $\mathcal{O}(d_{state})$, its capacity for storing long-range information remains limited when the sequence length grows very large.

\noindent\textbf{Conv-GLA.} Unlike standard linear attention that indiscriminately aggregates all samples and amplifies noise, Conv-GLA first applies localized smoothing before constructing global memory.
Specifically, a 1D convolution acts as a low-pass filter over the output of the third AFBM layer:
\begin{equation}
    \tilde{\mathcal{X}}_{i,j}^{(3)} =
    \text{Conv1D}\!\left(
    \mathcal{X}_{i,j}^{(3)}, \text{kernel\_size}=K
    \right),
\end{equation}
where $\tilde{\mathcal{X}}_{i,j}^{(3)} \in \mathbb{R}^d$ is the smoothed feature representation, $\mathcal{X}_{i,j}^{(3)}$ is the output from the final ($l=3$) AFBM layer, and $K$ denotes the kernel size of the 1D depthwise convolution applied along the sample dimension $N$ to aggregate local context and attenuate high-frequency noise.

We then obtain the key and value vectors via linear projections: $\mathbf{k}_{i,j} = \mathbf{W}_k \tilde{\mathcal{X}}_{i,j}^{(3)}$ and $\mathbf{v}_{i,j} = \mathbf{W}_v \tilde{\mathcal{X}}_{i,j}^{(3)}$, where $\mathbf{W}_k, \mathbf{W}_v \in \mathbb{R}^{d \times d}$. A data-dependent gating mechanism modulates the value vectors:
\begin{equation}
    \mathbf{g}_{i,j} =
    \text{SiLU}(\mathbf{W}_g \tilde{\mathcal{X}}_{i,j}^{(3)}),
    \quad
    \tilde{\mathbf{v}}_{i,j} =
    \mathbf{g}_{i,j} \odot \mathbf{v}_{i,j},
\end{equation}
where $\mathbf{g}_{i,j} \in (0,1)^d$ is the dynamic gating vector that evaluates the informativeness of the current sample. $\mathbf{W}_g \in \mathbb{R}^{d \times d}$ is the learnable gating projection matrix, $\text{SiLU}(\cdot)$ is the sigmoid linear unit activation function used to ensure smooth, non-linear gating probabilities, and $\odot$ denotes the element-wise (Hadamard) product.
The gated values are accumulated into a covariance memory matrix:
\begin{equation}
    \mathbf{S}_{i,j} =
    \mathbf{S}_{i-1,j} +
    \phi(\mathbf{k}_{i,j})
    \left(
    \mathbf{g}_{i,j} \odot \mathbf{v}_{i,j}
    \right)^\top,
\end{equation}
where $\mathbf{S}_{i,j} \in \mathbb{R}^{d \times d}$ is the accumulated global covariance memory matrix at step $i$, summarizing all historical key-value interactions for the $j$-th feature. $\phi(\cdot)$ is a non-linear kernel mapping function applied to the key vectors to maintain positive semi-definiteness, and $\top$ denotes the vector transpose operation forming the outer product between the mapped key and the gated value.

By adaptively suppressing uninformative samples ($\mathbf{g}_{i,j}\rightarrow 0$), the variance of the accumulated memory remains bounded, ensuring stable global context modeling even for extremely long sequences.

This carefully designed heterogeneous topology explicitly decouples aggressive local noise filtering (handled by the dynamic state memory of AFBM) from global long-range storage (handled by the static covariance memory of Conv-GLA). 
As a result, \texttt{FEAT} achieves scalable long-context modeling while avoiding both representation collapse and sequence-length degradation.

\subsection{Task-aware prediction}~\label{sec:prediction}
After the dual-axis representation learning across the $L$ stacked \texttt{FEAT} blocks, the network outputs a final contextualized 3D tensor $\mathcal{X}^{(L)} \in \mathbb{R}^{N \times (D+1) \times d}$. The sequence dimension expands to $D+1$ because a unified label token, acting as the target variable prompt, is appended to the $D$ feature tokens of each sample. For the labeled context samples in $\mathbf{X}_L$, this token encodes the ground-truth label $\mathbf{y}_L$; For the unlabeled query samples in $\mathbf{X}_U$, it is initialized as a learnable mask embedding.

To support downstream zero-shot prediction and self-supervised pre-training simultaneously, \texttt{FEAT} decouples the decoding process into three task-specific heads. We first extract the task-relevant hidden states from $\mathcal{X}^{(L)}$ for the query samples $i \in \{N_L + 1, \dots, N\}$, where $N_L = |\mathbf{X}_L|$ is the number of in-context examples:
\begin{align}
    \text{Target state:} \quad & \mathbf{z}^{y}_i = \mathcal{X}^{(L)}_{i, D+1, :} \in \mathbb{R}^{d} \\
    \text{Feature states:} \quad & \mathbf{z}^{x}_{i,j} = \mathcal{X}^{(L)}_{i, j, :} \in \mathbb{R}^{d}, \quad \forall j \in \{1, \dots, D\}
\end{align}

The target state $\mathbf{z}^{y}_i$ fundamentally aggregates both intra-sample feature dependencies and inter-sample supervision signals, while the feature states $\mathbf{z}^{x}_{i,j}$ capture the contextualized semantic representation of individual structured attributes. \texttt{FEAT} dynamically routes these states to the corresponding prediction heads, utilizing parallel MLPs with tailored architectures. For all following MLPs, we denote $d_{hidden}$ as the intermediate hidden projection dimension, and bias terms are omitted for notational brevity:

\noindent\textbf{Classification head.} For classification tasks with a finite label space of size $C$, the target state $\mathbf{z}^{y}_i$ is mapped to class logits via a two-layer network with a { GELU activation~\cite{GELU}}:
\begin{equation}
    \hat{\mathbf{y}}_i = \mathbf{W}_{c2}\big(\text{GELU}(\mathbf{W}_{c1}\mathbf{z}^{y}_i)\big) \in \mathbb{R}^{C},
\end{equation}
where $\mathbf{W}_{c1} \in \mathbb{R}^{d_{hidden} \times d}$ and $\mathbf{W}_{c2} \in \mathbb{R}^{C \times d_{hidden}}$ are the two learnable linear projection weights. The predicted categorical distribution closely approximates the true posterior $P_{\theta}(\mathbf{y}_{U}\mid \mathbf{X}_{U}, \mathbf{X}_{L}, \mathbf{y}_{L})$ as defined in Eq.~\ref{eq:icl_objective} via standard Softmax normalization.

\noindent\textbf{Regression head.} For continuous regression tasks, predicting unscaled scalar values is susceptible to explosive variance. Thus, we integrate Layer Normalization within the hidden projection to bound the activation magnitude before computing the final scalar output:
\begin{equation}
    \hat{y}_i = \mathbf{W}_{r2}\Big(\text{GELU}\big(\text{LayerNorm}(\mathbf{W}_{r1}\mathbf{z}^{y}_i)\big)\Big) \in \mathbb{R},
\end{equation}
where $\mathbf{W}_{r1} \in \mathbb{R}^{d_{hidden} \times d}$ and $\mathbf{W}_{r2} \in \mathbb{R}^{1 \times d_{hidden}}$ parameterize the regression MLP to output a continuous scalar.

\noindent\textbf{Missing value imputation head (MVIH).} To serve the context-conditional masked modeling (CCMM) pre-training objective and perform missing data imputation dynamically, \texttt{FEAT} must reconstruct the masked features. The feature state $\mathbf{z}^{x}_{i,j}$ is transformed back into the original numerical feature space using a stabilized architecture homologous to the regression head:
\begin{equation}
    \hat{x}_{i,j} = \mathbf{W}_{f2}\Big(\text{GELU}\big(\text{LayerNorm}(\mathbf{W}_{f1}\mathbf{z}^{x}_{i,j})\big)\Big) \in \mathbb{R},
\end{equation}
where $\mathbf{W}_{f1} \in \mathbb{R}^{d_{hidden} \times d}$ and $\mathbf{W}_{f2} \in \mathbb{R}^{1 \times d_{hidden}}$ represent the projection matrices specifically tailored for the task of accurate instance-level feature reconstruction.

This decoupled multi-head decoding strategy enables \texttt{FEAT} to perform seamless zero-shot downstream inference while simultaneously providing robust self-supervised feature reconstruction. By relying purely on the unified structural representations conditioned on the implicitly retrieved rules from the labeled context $\mathbf{X}_L$, the model circumvents any requirement for task-specific parameter updates or architecture modifications at inference time.

\subsection{Pre-training}~\label{sec:pretraining}
\vspace{-1mm}
\subsubsection{Hybrid SCM Generation Pipeline}
\label{subsec:pretraining_scm}

\vspace{-2mm}
While existing SFM pre-train on synthetic SCM, their hierarchical Directed Acyclic Graphs (DAGs) typically generate homogeneous structures with independent, identically distributed (i.i.d.) root variables. Such assumptions fail to reflect the statistical complexities of real-world structured-data domains. To synthesize training corpora $\mathbf{X} \in \mathbb{R}^{N \times D}$ and labels $\mathbf{y}$ that better match the dual-axis modeling capacity of \texttt{FEAT}, we introduce an advanced hybrid SCM generation pipeline. This pipeline progressively incorporates realistic structural variations and non-stationary statistical characteristics through four steps.

\begin{figure*}[t]
  \centering
  % Note: Ensure you upload 'architecture.pdf' to a 'figure' folder in Overleaf, 
  % or replace the path with your actual image file.
  \vspace{-5mm}
  \includegraphics[width=1.0\textwidth]{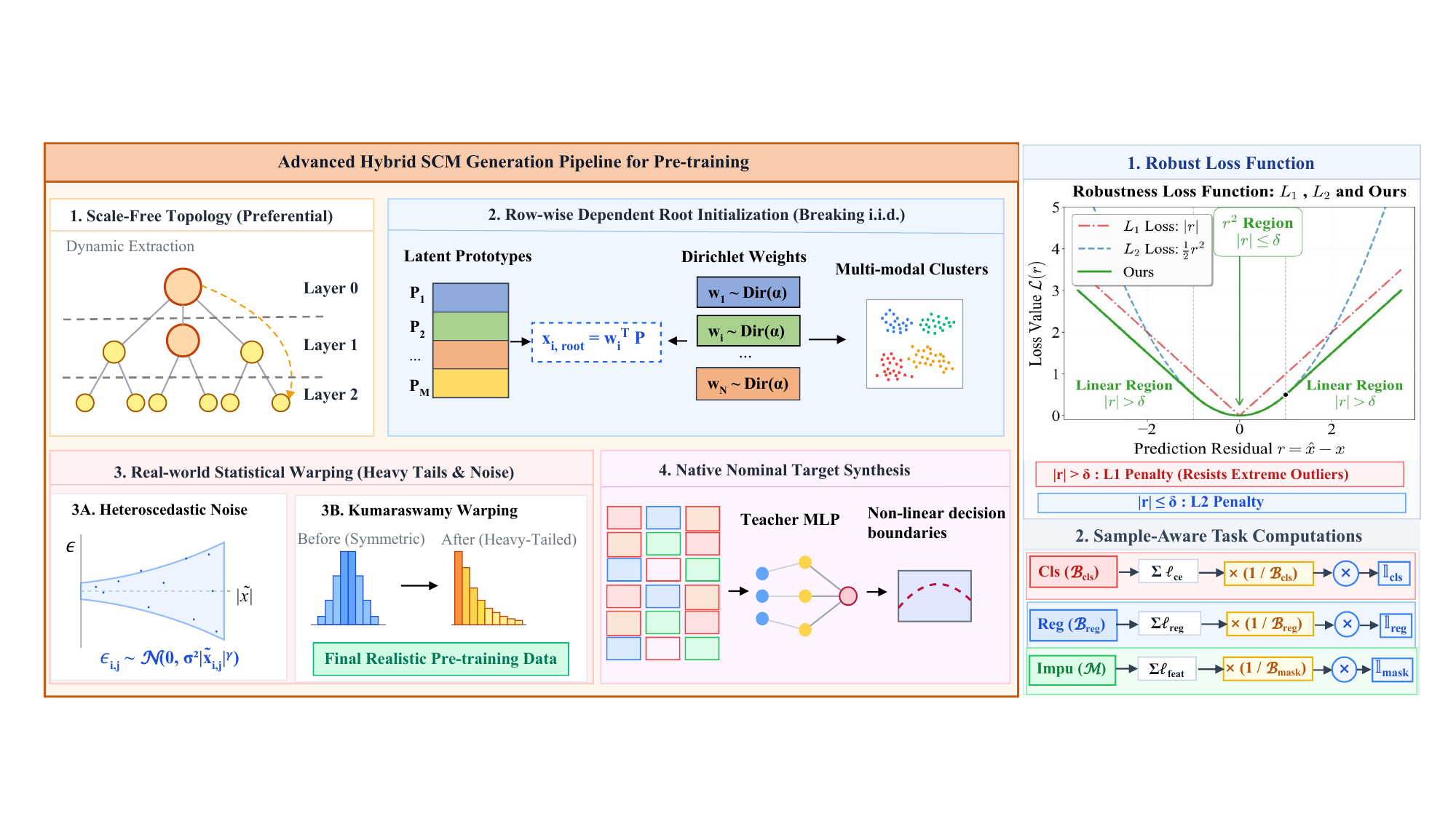}
  \vspace{-5mm}
  \caption{Pre-training methods of \texttt{FEAT}.}
  \label{fig:pre-train}
  \vspace{-5mm}
\end{figure*}

\noindent\textbf{(i) Scale-free causal graph generation.}
Standard generators use uniform edge sampling, ignoring feature importance. Instead, we construct the DAG topology $G = (V, E)$ over $D$ nodes using a preferential attachment mechanism. This yields a scale-free topology, allowing certain variables to emerge as highly connected "hubs." This design directly mimics the behavior of universal confounders in real-world datasets (e.g., "Age" or "Income"), which intrinsically dictate the distribution of numerous downstream variables.

\noindent\textbf{(ii) Prototype-based root initialization.}
Let $V_{root} \subset V$ denote the set of $D_{root}$ root nodes (in-degree of 0). To break the conventional i.i.d. assumption and explicitly introduce multi-modal row correlations, we assume the existence of $M$ latent prototypes $\mathbf{P} \in \mathbb{R}^{M \times D_{root}}$ representing underlying population clusters. For each sample $i \in \{1,\dots,N\}$, we draw an individual mixture weight vector from a Dirichlet distribution $\mathbf{w}_i \sim \text{Dir}(\bm{\alpha})$. The root features are then synthesized as:
\begin{equation}
    \mathbf{x}_{i, root} = \mathbf{w}_i^\top \mathbf{P} \in \mathbb{R}^{D_{root}}
\end{equation}
This explicitly injects sample-wise clustering structures commonly observed in structured patient/user profiles.

\noindent\textbf{(iii)  Causal propagation with heteroscedastic noise.}
For every continuous non-root node $j \in V \setminus V_{root}$, its intermediate value $\tilde{x}_{i,j}$ is propagated via a non-linear causal mechanism $f_j$ acting on its parent variables. We denote this set of parent variables as $\text{Pa}(j)$. Specifically, the intermediate representation is computed as $\tilde{x}_{i,j} = f_j(\mathbf{x}_{i, \text{Pa}(j)})$. To simulate measurement flaws and value-dependent uncertainties, we deliberately choose to abandon static Gaussian noise and introduce exponential heteroscedastic noise. The final variable value is modified as:
\begin{equation}
    x_{i,j} = \tilde{x}_{i,j} + \epsilon_{i,j}, \quad \text{where } \epsilon_{i,j} \sim \mathcal{N}(0, \sigma^2 |\tilde{x}_{i,j}|^\gamma),
\end{equation}
where $\gamma$ explicitly scales the noise variance with the magnitude of the signal, a hallmark of heavily skewed real-world metrics.

\noindent\textbf{(iv) Nominal target synthesis \& marginal warping.}
To generate the final target labels $y_i$, we eschew the naive ordinal binning used in prior works. Instead, we employ a native nominal synthesis process where a sparse teacher MLP queries the intermediate causal representations explicitly to compute target logits. By stochastically modulating the Signal-to-Noise Ratio in these logits, this procedure inherently produces highly non-linear, heterogeneous decision boundaries mapping to regression or classification tasks. 

Finally, to faithfully emulate heavy-tailed outliers, we apply stochastic Kumaraswamy warping to the continuous variables. This transformation skews the marginal feature distributions toward heavy-tailed domains while precisely preserving the monotonic ranking and dependencies implied by the causal structure. 

Together, these sequentially applied structural and statistical innovations synthesize a fundamentally rigorous pre-training corpus, substantially narrowing the reality gap between synthetic SCM arrays and naturally occurring structured-data distributions.

\vspace{-1mm}
\subsubsection{Loss Function}
\label{subsec:pretraining_objective}

Following the context-conditional masked modeling paradigm, the pre-training objective of \texttt{FEAT} jointly optimizes feature reconstruction and task prediction. To ensure numerical stability under heavy-tailed structured-data distributions and highly variable context lengths, the optimization pipeline incorporates robust measurement mechanisms and a dynamic loss balancing strategy.

For the self-supervised feature reconstruction objective, we adopt the Smooth L1 loss rather than the standard mean squared error (MSE). Structured data frequently contains extreme outliers manifesting from heavy-tailed distributions, which trigger exploding gradients when penalized squarely. The Huber mechanism acts as an L2 penalty for minor reconstruction errors while smoothly transitioning to an L1 penalty for severe deviations. Let $\mathcal{M}$ denote the set of masked cell coordinates $(i,j)$ in a given batch, where $i \in \{1,\dots,N\}$ indexes the sample and $j \in \{1,\dots,D\}$ indexes the feature column. For a predicted feature value $\hat{x}_{i,j}$ and its underlying ground truth $x_{i,j}$, the localized reconstruction loss is defined as:
\begin{equation}
    \ell_{feat}^{(i,j)} = \begin{cases} 
        \frac{1}{2}(\hat{x}_{i,j} - x_{i,j})^2 & \text{for } |\hat{x}_{i,j} - x_{i,j}| \le \delta, \\
        \delta |\hat{x}_{i,j} - x_{i,j}| - \frac{1}{2}\delta^2 & \text{otherwise},
    \end{cases}
    \label{eq:huber_loss}
\end{equation}
where $\delta > 0$ is a strict threshold hyperparameter governing the transition regime between squared and absolute penalty scales.

To further stabilize the optimization space across heterogeneous downstream semantics, we enforce a dynamic loss balancing strategy. Because each synthesized multi-task batch contains radically fluctuating proportions of classification samples, regression samples, and masked features, assigning static scalar weights inevitably results in gradient domination or representation collapse. To counteract this, the overall loss aggregates each task-specific objective averaged strictly over its valid samples. 

We define $\mathcal{B}_{cls}$ and $\mathcal{B}_{reg}$ as the varying sets of sample indices $i$ natively apportioned to classification and regression tasks in the current batch. The dynamically balanced total loss objective is aggregated as:
\vspace{-2mm}
\begin{equation}
    \begin{split}
        \mathcal{L}_{total} =\; & \mathbb{I}_{cls} \left( \frac{1}{|\mathcal{B}_{cls}|} \sum_{i \in \mathcal{B}_{cls}} \ell_{ce}^{(i)} \right) + 
        \mathbb{I}_{reg} \left( \frac{1}{|\mathcal{B}_{reg}|} \sum_{i \in \mathcal{B}_{reg}} \ell_{reg}^{(i)} \right) \\
        & + \mathbb{I}_{mask} \left( \frac{1}{|\mathcal{M}|} \sum_{(i,j) \in \mathcal{M}} \ell_{feat}^{(i,j)} \right),
    \end{split}
    \label{eq:dynamic_loss}
\end{equation}
where $\ell_{ce}^{(i)}$ calculates the standard cross-entropy penalty given categorical probabilities, and $\ell_{reg}^{(i)}$ leverages a task-level Huber objective homologous to Eq.~\ref{eq:huber_loss} for stable continuous target optimization. Functionally, $|\mathcal{B}_{cls}|$, $|\mathcal{B}_{reg}|$, and $|\mathcal{M}|$ express the absolute cardinality of their specific valid domains. The indicator functions $\mathbb{I}_{cls}, \mathbb{I}_{reg}, \mathbb{I}_{mask} \in \{0, 1\}$ operate as computational logical switches  activating specific loss modules exclusively when their pertinent sets evaluate as non-empty during the stochastic sampling batch.

Finally, to preclude numeric saturation endemic to natively sporadic structured templates, precise boolean verification arrays explicitly mask any missing data cells ahead of arithmetic projection. This protocol comprehensively ensures that intrinsic `NaN` or `Inf` artifacts never taint the gradient trajectories traversing throughout arbitrary length resolutions.
The overall pre-training pipeline of \texttt{FEAT} is summarized in Algorithm~\ref{alg:feat_pretraining}

\begin{algorithm}[t]
\caption{Pre-training of FEAT}\label{alg:feat_pretraining}
\DontPrintSemicolon
\SetKwInOut{Input}{Input}
\SetKwInOut{Output}{Output}
\SetKw{Return}{return}
\SetKwComment{tcp}{$\triangleright$ }{}
% 【关键修改1：强制将 ForEach 渲染为分开的 for each】
\SetKwFor{ForEach}{for each}{do}{end} 

\Input{Hybrid generator $\mathcal{G}$; Model $f_{\theta}$ with $L$ dual-axis blocks; Steps $T$; Context ratio $r_{ctx}$}
\Output{Pre-trained parameters $\theta^{*}$}

\BlankLine
% 【关键修改2：在初始化部分用极简的语言定义指示函数】
Initialize $\theta$ randomly; define indicator function $\mathbb{I}_{\{\cdot\}} \in \{0, 1\}$\; 
Setup AdamW optimizer with gradient clipping\;
\For{$t \leftarrow 1$ \KwTo $T$}{
    \BlankLine
    Sample dimensions $(N, D)$ and task $\mathcal{T} \in \{\text{CLS}, \text{REG}\}$\;
    $(\mathbf{X}, \mathbf{y}) \leftarrow \mathcal{G}(N, D, \mathcal{T})$\;
    Split context $(\mathbf{X}_L, \mathbf{y}_L) \leftarrow (\mathbf{X}_{1:N_L}, \mathbf{y}_{1:N_L})$, where $N_L = \lfloor r_{ctx} \cdot N \rfloor$\;
    Target set $\mathbf{X}_U \leftarrow \mathbf{X}_{N_L+1:N}$; sample mask ratio $\rho \sim \mathcal{U}(\rho_{\min}, \rho_{\max})$\;
    Generate mask $\mathcal{M}$ on $\mathbf{X}_U$ with ratio $\rho$, replacing cells $(i,j) \in \mathcal{M}$ with zeros\;

    \BlankLine
    
    \ForEach{$(i, j) \in [N] \times [D]$}{
        % 这里完美复用上面定义的指示函数
        $\mathbf{e}^{val}_{i,j} \leftarrow \mathbb{I}_{\{(i,j) \in \mathcal{M}\}} \mathbf{m}_{mask} + \mathbb{I}_{\{(i,j) \notin \mathcal{M}\}} \text{MLP}(x_{i,j})$\;
    }
    Sample orthogonal $\mathbf{O} \in \mathbb{R}^{D \times \frac{d}{4}}$ and compute $\mathbf{loc}^{col}_j \leftarrow \mathbf{o}_j \mathbf{W}_{dfe}$\;
    $\mathcal{X}^{(0)}_{i,j} \leftarrow \text{LayerNorm}(\mathbf{e}^{val}_{i,j} + \mathbf{loc}^{col}_j)$ yielding $\mathcal{X}^{(0)} \in \mathbb{R}^{N \times (D+1) \times d}$\;

    \BlankLine
    
    \For{$l \leftarrow 1$ \KwTo $L$}{
        $\mathcal{H}^{(l)} \leftarrow \text{MHSA-FFN}(\mathcal{X}^{(l-1)})$ \tcp*{Feature axis}
        $\mathcal{X}^{(l)} \leftarrow \text{AFBM-or-ConvGLA}(\mathcal{H}^{(l)})$ \tcp*{Sample axis}
    }

    \BlankLine
    
    Extract task $\mathbf{z}^{y}_i \leftarrow \mathcal{X}^{(L)}_{i, D+1, :}$ and feature $\mathbf{z}^{x}_{i,j} \leftarrow \mathcal{X}^{(L)}_{i, j, :}$ for $i > N_L$\;
    Predict $\hat{\mathbf{y}}_i \leftarrow \text{TaskHead}(\mathbf{z}^{y}_i, \mathcal{T})$ and $\hat{x}_{i,j} \leftarrow \text{MVIH}(\mathbf{z}^{x}_{i,j})$\;
    $\mathcal{L}_{total} \leftarrow \mathcal{L}_{task}(\hat{\mathbf{y}}, \mathbf{y}_U) + \mathcal{L}_{feat}(\hat{\mathbf{x}}, \mathbf{X}_U[\mathcal{M}])$ \tcp*{Eq.~\ref{eq:dynamic_loss}}
    Update $\theta$ via back-propagation using $\nabla_{\theta}\mathcal{L}_{total}$\;
}
\BlankLine
\Return{$\theta^{*} \leftarrow \theta$}\;
\end{algorithm}

\section{Theoretical Analysis}
\label{sec:theory}

In this section, we present the theoretical analysis of \texttt{FEAT}, focusing on its linear scalability, numerical stability over long sequences, and global permutation-invariant contextualization. Due to space constraints, we provide proof sketches.
% Full rigorous proofs are available in our extended technical report [Citation/Link].

\subsection{Complexity and Scalability}
\label{subsec:theory_complexity}
\texttt{FEAT} achieves linear scalability through its sample-axis encoder, which processes data using sequential state-space modeling instead of pairwise interactions.
\vspace{-1mm}
\begin{theorem}[Linear Complexity]
\label{thm:complexity}
% Given an embedded input tensor $\mathcal{X} \in \mathbb{R}^{N \times D \times d}$ (where $N$ is the tuple count, $D$ is the feature count, and $d$ is the embedding dimension), the computational time complexity of the \texttt{FEAT} sample-axis encoder is strictly $\mathcal{O}(N)$.

Given an embedded input tensor $\mathcal{X} \in \mathbb{R}^{N \times D \times d}$, where $N$ is the number of tuples, $D$ is the number of features, and $d$ is the embedding dimension, the time complexity of the \texttt{FEAT} sample-axis encoder is $\mathcal{O}(N)$.
\end{theorem}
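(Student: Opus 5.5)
We would prove Theorem~\ref{thm:complexity} by a direct operation count over the four layers of the sample-axis stage (three AFBM layers followed by one Conv-GLA layer), treating $D$, $d$, $d_{state}$, the kernel size $K$, and the number of layers as constants independent of $N$. The key structural observation is that the sample-axis stage acts independently on each of the $D$ feature columns. For a fixed column $j$ it processes the length-$N$ sequence $(\hat{\mathcal{X}}^{(l-1)}_{1,j},\dots,\hat{\mathcal{X}}^{(l-1)}_{N,j})$ through recurrences whose per-step cost does not depend on $N$. It therefore suffices to show that each layer costs $\mathcal{O}(N)$ per column, and then multiply by $D$ and by the constant number of layers.

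\textbf{Step 1 (AFBM).} For the forward recurrence $\overrightarrow{\mathbf{h}}_{i,j} = \mathbf{\bar{A}}\overrightarrow{\mathbf{h}}_{i-1,j} + \mathbf{\bar{B}}\hat{\mathcal{X}}_{i,j}$, each step is one $d_{state}\times d_{state}$ matrix--vector product and one $d_{state}\times d$ product, so it costs $\mathcal{O}(d_{state}^2 + d_{state} d)$. Under the diagonal or scalar structure of Mamba-2, the first term drops to $\mathcal{O}(d_{state})$. Summing over $i=1,\dots,N$ gives $\mathcal{O}(N(d_{state}^2+d_{state}d))$. The backward pass is the same operator applied in reverse order and has the same cost. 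The fusion $\mathbf{W}_{fuse}[\overrightarrow{\mathbf{h}}\parallel\overleftarrow{\mathbf{h}}]$ adds $\mathcal{O}(d\,d_{state})$ per position. Any discretization of $\Delta$, $\mathbf{\bar{A}}$, $\mathbf{\bar{B}}$ is computed once per position, so it too is $\mathcal{O}(1)$ in $N$.

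\textbf{Step 2 (Conv-GLA).} The depthwise Conv1D costs $\mathcal{O}(Kd)$ per position. The projections $\mathbf{W}_k$, $\mathbf{W}_v$, $\mathbf{W}_g$ together with the SiLU gate cost $\mathcal{O}(d^2)$ per position. The memory update $\mathbf{S}_{i,j}=\mathbf{S}_{i-1,j}+\phi(\mathbf{k}_{i,j})(\mathbf{g}_{i,j}\odot\mathbf{v}_{i,j})^\top$ is a rank-one outer-product update of a $d\times d$ matrix, costing $\mathcal{O}(d^2)$ per position. Reading out with a query, $\phi(\mathbf{q}_{i,j})^\top\mathbf{S}_{i,j}$, adds another $\mathcal{O}(d^2)$ per position. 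Running a forward and a reverse accumulation at most doubles these costs. Summing over the sequence gives $\mathcal{O}(N(Kd+d^2))$ per column.

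\textbf{Step 3 (aggregation).} Summing over the $D$ columns and the four layers gives a total cost of $\mathcal{O}\big(N D(d_{state}^2 + d\,d_{state} + Kd + d^2)\big)$, which is $\mathcal{O}(N)$ for fixed architecture hyperparameters. This contrasts with $\mathcal{O}(N^2 D d)$ for full attention.

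The main obstacle is justifying that no step implicitly materializes an $N\times N$ object. For Conv-GLA this requires using the recurrent (or chunked) form of linear attention rather than its parallel form $(\phi(Q)\phi(K)^\top\odot M)V$. For Mamba-2 it requires noting that the chunked SSD algorithm with a fixed chunk size $c$ costs $\mathcal{O}(Nc)$, which is still linear in $N$. We would state explicitly that the model is evaluated in these forms. We would also state that the constants absorbed into the $\mathcal{O}(\cdot)$ bound are independent of $N$.
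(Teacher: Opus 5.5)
Your proposal is correct and follows the paper's own argument: a per-position operation count along the sample axis for each of the $D$ feature columns, through the AFBM recurrences and the Conv-GLA convolution and accumulation. This gives a total of $\mathcal{O}\big(N\cdot D\cdot(d\,d_{state}+dK+d^2)\big)$, which is linear in $N$ once $D$, $d$, $d_{state}$, $K$ and the layer count are fixed. Your extra care only makes explicit what the paper's sketch leaves implicit: the $d_{state}^2$ term for a dense $\mathbf{\bar{A}}$ versus the $\mathcal{O}(d_{state})$ cost under Mamba-2's scalar structure, and the requirement that the recurrent or chunked forms are used so that no $N\times N$ object is materialized.
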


\vspace{-4mm}

\begin{proof}[Proof Sketch]
% The forward pass complexity is bounded by its sequential operations across $D$ feature streams. For the 3-layer AFBM, the discrete state transitions and input projections per token require $\mathcal{O}(d_{state})$ and $\mathcal{O}(d \cdot d_{state})$ respectively, where $d_{state}$ is the latent state dimension. A bidirectional pass scales as $\mathcal{O}(N \cdot D \cdot d \cdot d_{state})$. Subsequently, the Conv-GLA layer applies a 1D convolution requiring $\mathcal{O}(N \cdot D \cdot d \cdot K)$ operations (with kernel size $K$), followed by Gated Linear Attention state accumulation requiring $\mathcal{O}(N \cdot D \cdot d^2)$ FLOPs. Summing these terms yields a total operations count of $\mathcal{T}_{total} = \mathcal{O}\!\left(N \cdot D \cdot (d \cdot d_{state} + d \cdot K + d^2)\right)$. Since $D$, $d$, $d_{state}$, and $K$ are architectural constants independent of the database scale $N$, the asymptotic temporal complexity scales strictly as $\mathcal{O}(N)$. Spatial complexity similarly scales linearly as it only caches the localized running states.

The sample-axis encoder, composed of the AFBM and Conv-GLA modules, processes the input through sequential operations along the sample dimension.
(i) For the 3-layer AFBM, the discrete state transition and input projection at each token require $\mathcal{O}(d_{state})$ and $\mathcal{O}(d \cdot d_{state})$ operations, respectively, where $d_{state}$ denotes the latent state dimension. 
Across $N$ tuples and $D$ feature streams, the bidirectional AFBM therefore requires $\mathcal{O}(N \cdot D \cdot d \cdot d_{state})$ operations. 
(ii) The subsequent Conv-GLA layer consists of a local convolution and gated linear attention. 
The 1D convolution with kernel size $K$ requires $\mathcal{O}(N \cdot D \cdot d \cdot K)$ operations. 
The gated linear attention accumulates states in a recurrent form and requires $\mathcal{O}(N \cdot D \cdot d^2)$ operations.

Combining these components, the overall computational cost is proportional to $N \cdot D \cdot (d \cdot d_{state} + d \cdot K + d^2)$. 
Since $D$, $d$, $d_{state}$, and $K$ are architectural constants independent of the database size $N$, the total time complexity scales as $\mathcal{O}(N)$. 
The space complexity also scales linearly with $N$, as the encoder only maintains localized running states rather than constructing a pairwise attention matrix.
\end{proof}
\vspace{-3mm}

\subsection{Numerical Stability for Long Sequences}
\label{subsec:theory_variance}
\texttt{FEAT} ensures numerical stability over large datasets through the Conv-GLA module, which controls the accumulation of information and bounds the variance of the internal states.
\vspace{-1.5mm}
\begin{theorem}[Variance Boundedness]
\label{thm:variance}
% The Conv-GLA mechanism strictly bounds the variance of the accumulated state representation, mathematically avoiding the $\mathcal{O}(N)$ divergence (``linear trap'') inherent in standard linear attention as sequence length $N \to \infty$.

Given the output tensor $\mathcal{X}^{(l)} \in \mathbb{R}^{N \times D \times d}$ from the previous layer $l$, where $N$ is the number of samples, $D$ is the number of features, and $d$ is the embedding dimension, the Conv-GLA module takes $\mathcal{X}^{(l)}$ as input and produces an accumulated state representation whose variance remains bounded and does not grow with $N$.
\end{theorem}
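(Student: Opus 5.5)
The plan is to make precise what ``accumulated state representation'' means before bounding anything. The raw recursion $\mathbf{S}_{i,j}=\mathbf{S}_{i-1,j}+\phi(\mathbf{k}_{i,j})(\mathbf{g}_{i,j}\odot\mathbf{v}_{i,j})^\top$ is a plain sum of $i$ outer products. If the increments have nonzero mean or are only weakly correlated, then $\mathrm{Var}(\mathbf{S}_{N,j})$ grows like $\Theta(N)$ no matter how the gate acts, unless $\mathbf{g}_{i,j}\to 0$ for all but $\mathcal{O}(1)$ samples. I will therefore prove the bound for the quantity that Conv-GLA actually emits, namely the kernel-normalized linear-attention readout. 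I fix a query map $\phi(\mathbf{q}_{i,j})$ and a normalizer $\mathbf{z}_{i,j}=\sum_{t\le i}\phi(\mathbf{k}_{t,j})$, and define
\begin{equation*}
\mathbf{o}_{i,j}=\frac{\mathbf{S}_{i,j}^\top\phi(\mathbf{q}_{i,j})}{\phi(\mathbf{q}_{i,j})^\top\mathbf{z}_{i,j}}=\sum_{t\le i}\alpha_{i,t}\,(\mathbf{g}_{t,j}\odot\mathbf{v}_{t,j}),\qquad \alpha_{i,t}=\frac{\phi(\mathbf{q}_{i,j})^\top\phi(\mathbf{k}_{t,j})}{\sum_{s\le i}\phi(\mathbf{q}_{i,j})^\top\phi(\mathbf{k}_{s,j})}.
\end{equation*}
Since $\phi$ is nonnegative (the positivity the paper requires of the kernel map), the weights satisfy $\alpha_{i,t}\ge 0$ and $\sum_t\alpha_{i,t}=1$.

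The first step is to bound the individual increments uniformly in $N$. The input $\mathcal{X}^{(l)}$ comes out of LayerNorm-ed residual blocks, so I will assume each token has bounded second moment $\mathbb{E}\|\mathcal{X}^{(l)}_{i,j}\|^2\le c$. The depthwise Conv1D with kernel size $K$ is a fixed linear filter, so $\mathbb{E}\|\tilde{\mathcal{X}}^{(l)}_{i,j}\|^2\le K\|w\|_2^2\,c$ by Cauchy--Schwarz. Then $\mathbf{v}=\mathbf{W}_v\tilde{\mathcal{X}}$ is bounded by $\|\mathbf{W}_v\|^2$ times this. The gate contributes $|\mathbf{g}|\le|\mathbf{W}_g\tilde{\mathcal{X}}|$, because $|\mathrm{SiLU}(u)|\le|u|$; if the paper's claim that $\mathbf{g}\in(0,1)^d$ is taken literally, it contributes simply $\le 1$. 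To keep the bound at second moments, I will treat the gate as bounded (sigmoid-type). Otherwise I need fourth moments of the input, which I would also assume bounded. The outcome is a constant $\sigma^2_{\max}$, depending only on $d$, $K$ and the weights, with $\mathbb{E}\|\mathbf{g}_{t,j}\odot\mathbf{v}_{t,j}\|^2\le\sigma^2_{\max}$ for every $t$.

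The second step is the convexity argument. Because $\mathbf{o}_{i,j}$ is a convex combination, Jensen's inequality gives $\|\mathbf{o}_{i,j}\|^2\le\sum_t\alpha_{i,t}\|\mathbf{g}_{t,j}\odot\mathbf{v}_{t,j}\|^2$. The weights $\alpha_{i,t}$ are random and correlated with the values, so I cannot take expectations term by term. Instead I bound pointwise: $\|\mathbf{o}_{i,j}\|^2\le\max_t\|\mathbf{g}_{t,j}\odot\mathbf{v}_{t,j}\|^2$. This is where I need almost-sure boundedness of the tokens, which LayerNorm supplies up to the affine parameters: the normalized token has norm exactly $\sqrt d$. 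Hence $\mathrm{Var}(\mathbf{o}_{i,j})\le\mathbb{E}\|\mathbf{o}_{i,j}\|^2\le\sigma^2_{\max}$, independent of $N$. I will contrast this with the ungated, unnormalized sum, whose variance is $\sum_t\mathrm{Var}(\text{increment})+\text{cross terms}=\Theta(N)$; that linear growth is the ``linear trap'' referred to in Section~\ref{sec:problem_statement}.

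The main obstacle is the gap between the statement and the recursion as written: it is $\mathbf{S}$ itself that is called ``accumulated,'' and $\mathbf{S}$ is not variance-bounded. My first attempt is to argue via the normalized readout above. If the intended object really is $\mathbf{S}$, I would fall back on a decayed variant $\mathbf{S}_{i}=\mathrm{diag}(\mathbf{a}_i)\mathbf{S}_{i-1}+\dots$ with forget gate $\|\mathbf{a}_i\|_\infty\le\gamma<1$, the GLA form of~\cite{yang2024gated}. That variant gives a geometric bound $\mathbb{E}\|\mathbf{S}_i\|_F^2\le\sigma^2_{\max}\,\|\phi(\mathbf{k})\|^2_{\max}/(1-\gamma)^2$, obtained by unrolling the recursion and applying the triangle inequality in $L^2$. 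I would state explicitly which of the two readings the theorem is proved under.
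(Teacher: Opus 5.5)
\textbf{Gap: neither reading bounds the object the theorem is about.} In the paper, the accumulated state of Conv-GLA is exactly the undecayed sum $\mathbf{S}_{i,j}=\mathbf{S}_{i-1,j}+\phi(\mathbf{k}_{i,j})(\mathbf{g}_{i,j}\odot\mathbf{v}_{i,j})^\top$, and the module has no query, no normalizer and no forget gate. Your convex-combination argument bounds a readout $\mathbf{o}_{i,j}$ that you add to the module. Your fallback bounds a decayed recursion that is not the paper's. Both arguments are essentially sound for what they bound, but they prove a different theorem.

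The paper keeps $\mathbf{S}$ and gets $N$-independence from the escape hatch you named and then set aside. It models the gate $g_i\in(0,1)$ as a Bernoulli variable active for an expected number $M\ll N$ of samples, with $M$ independent of $N$, i.e.\ $\rho=M/N$. It also assumes uncorrelated values with covariance $\sigma^2\mathbf{I}$ and $\sum_m w_m^2\le 1$. For $\mathbf{z}_i=g_i\tilde{\mathbf{v}}_i$ the law of total variance then gives $\mathrm{Var}(\mathbf{z}_i)=\rho\,\mathrm{Var}(\tilde{\mathbf{v}}_i)+\rho(1-\rho)\boldsymbol{\mu}\boldsymbol{\mu}^\top$. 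The sketch treats $\mathbf{S}_N$ as $\sum_i\mathbf{z}_i$, suppressing $\phi(\mathbf{k})$. Summing $N$ uncorrelated increments yields $\mathrm{Var}(\mathbf{S}_N)=N\,\mathrm{Var}(\mathbf{z}_i)\le M(\sigma^2\mathbf{I}\sum_m w_m^2+\boldsymbol{\mu}\boldsymbol{\mu}^\top)$. If your goal is the statement as posed, this sparse-gate assumption and the computation that follows are what your proposal is missing.

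Your diagnosis is still worth keeping as commentary. Unless the activation rate decays like $1/N$, $\mathrm{Var}(\mathbf{S}_N)$ does grow linearly, so the paper's bound is only as strong as the assumption $\rho=M/N$. Your routes reach an $N$-independent bound without any $N$-dependent modeling assumption, which shows that the bound comes from normalization or decay, not from gating.

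If you present the readout argument as a complement, repair two points:
\begin{enumerate}
\item The input to Conv-GLA is the AFBM output $\mathbf{W}_{fuse}^{(3)}[\overrightarrow{\mathbf{h}}\parallel\overleftarrow{\mathbf{h}}]$, not a LayerNorm output, and the feature-axis blocks are pre-LN with an unnormalized residual stream. So no token has norm exactly $\sqrt{d}$. You must assume an almost-sure bound on the tokens, or derive it, e.g.\ from a stable SSM transition with bounded inputs.
\item Step 2 needs an almost-sure constant $B\ge\|\mathbf{g}_{t,j}\odot\mathbf{v}_{t,j}\|^2$, not the second-moment constant $\sigma^2_{\max}$ from Step 1. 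The reason is that $\mathbb{E}\max_{t\le N}$ of variables with only bounded second moments can grow with $N$. Once you have $B$, the SiLU-versus-sigmoid issue for the gate also disappears, since $|\mathrm{SiLU}(u)|\le|u|$ on bounded inputs.
\end{enumerate}
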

\vspace{-5mm}
\begin{proof}[Proof Sketch]
% Assume uncorrelated input value vectors $\mathbf{v}_{i} \in \mathbb{R}^d$ with variance $\text{Var}(\mathbf{v}_i) = \sigma^2 \mathbf{I}$.
% % In naive linear attention, the accumulated state variance diverges linearly as $\sum_{i=1}^N \sigma^2 \mathbf{I} = \mathcal{O}(N)$. 
% In Conv-GLA, the 1D convolution weights $w_m$ structurally attenuate the step-wise variance. The smoothed vectors $\tilde{\mathbf{v}}_i$ have an expectation $\boldsymbol{\mu}$ and an attenuated variance bounded by $\sigma^2 \mathbf{I} \sum w_m^2$ (via the Cauchy-Schwarz inequality). 

% Crucially, the gating mechanism $g_{i} \in (0,1)$ acts as a learned information sieve. Assuming there are $M \ll N$ informative tokens, $g_i$ can be approximated as a Bernoulli variable with activation probability $\rho = M/N$. Applying the Law of Total Variance to the gated vector $\mathbf{z}_i = g_i \cdot \tilde{\mathbf{v}}_i$ yields $\text{Var}(\mathbf{z}_i) \approx \rho \text{Var}(\tilde{\mathbf{v}}_i) + \rho(1-\rho)\boldsymbol{\mu}\boldsymbol{\mu}^\top$. Summing over $N$ uncorrelated steps, the global variance is $\text{Var}(\mathbf{S}_{N}) = N \cdot \text{Var}(\mathbf{z}_i) \le M ( \sigma^2 \mathbf{I} \sum w_m^2 + \boldsymbol{\mu}\boldsymbol{\mu}^\top )$. Because the weights are normalized ($\sum w_m^2 \le 1$), the asymptotic variance is constrained exclusively by the constant $M$ intrinsic concepts, rendering it mathematically independent of arbitrary lengths $N$ and ensuring $\mathcal{O}(1)$ numerical stability.

Given an input tensor $\mathcal{X}^{(l)} \in \mathbb{R}^{N \times D \times d}$, 
we analyze an arbitrary feature dimension, since Conv-GLA operates along the sample axis independently for each feature. 
Let $\mathbf{v}_{i} \in \mathbb{R}^d$ denote the value vector of the $i$-th sample under this feature dimension. 
Assume $\mathbf{v}_{i}$ are uncorrelated with $\mathrm{Var}(\mathbf{v}_i)=\sigma^2\mathbf{I}$.

The Conv-GLA module first applies a 1D convolution along the sample axis. 
Let $w_m$ denote the convolution weights and $\tilde{\mathbf{v}}_i$ the smoothed value vector. 
By the Cauchy-Schwarz inequality, $\mathrm{Var}(\tilde{\mathbf{v}}_i) \le \sigma^2\mathbf{I}\sum_m w_m^2$. 
With normalized weights ($\sum_m w_m^2 \le 1$), the convolution does not amplify the variance.

The gated representation is defined as $\mathbf{z}_i = g_i \cdot \tilde{\mathbf{v}}_i$, where $g_i \in (0,1)$. 
Assume the gate activates an expected number of $M \ll N$ informative samples, i.e., $g_i$ follows a Bernoulli variable with probability $\rho = M/N$. 
By the Law of Total Variance, the variance of $\mathbf{z}_i$ can be expressed as:
\vspace{-1mm}
\begin{equation}
\mathrm{Var}(\mathbf{z}_i) \approx \rho \mathrm{Var}(\tilde{\mathbf{v}}_i) + \rho(1-\rho)\boldsymbol{\mu}\boldsymbol{\mu}^{\top},
\vspace{-1mm}
\end{equation}
where $\boldsymbol{\mu}=\mathbb{E}[\tilde{\mathbf{v}}_i]$.

Aggregating over $N$ samples yields $\mathrm{Var}(\mathbf{S}_{N}) = N \cdot \mathrm{Var}(\mathbf{z}_i)$. 
Substituting $\rho=M/N$, the accumulated variance satisfies:
\vspace{-1mm}
\begin{equation}
\mathrm{Var}(\mathbf{S}_{N}) \le M(\sigma^2\mathbf{I}\sum_m w_m^2 + \boldsymbol{\mu}\boldsymbol{\mu}^{\top})
\vspace{-1mm}
\end{equation}
Since $M$ and the model parameters are independent of $N$, the accumulated variance is bounded with respect to the number of samples.

\end{proof}

\vspace{-6.5mm}

\subsection{Permutation-Invariant Global Contextualization}
\label{subsec:theory_bifusion}
\texttt{FEAT} enables global, permutation-invariant contextualization through the AFBM module, which propagates information bidirectionally across all samples.
\vspace{-1.5mm}
\begin{theorem}[Zero-Phase Global Receptive Field]
\label{thm:bifusion}
% The AFBM module in \texttt{FEAT} guarantees a non-zero gradient influence Jacobian $\mathcal{I}_{fused}(i, k) > 0$ for all $i,k \in \{1, \dots, N\}, i\leq k$, mathematically eliminating the causal phase shift and providing a learned symmetric receptive field equivalent to full self-attention.

Given an input tensor $\mathcal{X}^{(l-1)} \in \mathbb{R}^{N \times D \times d}$, assume that the forward and backward Mamba state transitions are non-degenerate and the fusion projection assigns non-zero weights to both directional states. 
For any samples $i,k \in \{1,\dots,N\}$, let $\mathbf{h}_i$ denote the fused AFBM representation of the $i$-th sample and let $\mathcal{X}^{(l-1)}_k$ denote the previous-layer representation of the $k$-th sample. 
Then, the fused AFBM representation induces a non-zero gradient influence Jacobian
$\mathcal{I}_{fused}(i,k)=\left\|\frac{\partial \mathbf{h}_i}{\partial \mathcal{X}^{(l-1)}_k}\right\|_F>0$.

\end{theorem}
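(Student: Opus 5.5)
The argument unrolls both directional recurrences of AFBM into closed form, differentiates the fused output with respect to $\mathcal{X}^{(l-1)}_k$, and splits into the cases $k<i$, $k>i$ and $k=i$. We fix a feature column $j$ and suppress the index, since AFBM acts on each feature stream independently.

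\textbf{Unrolling.} Unrolling the forward recurrence gives
$\overrightarrow{\mathbf{h}}_i=\sum_{k\le i}\bar{\mathbf{A}}^{\,i-k}\bar{\mathbf{B}}\,\mathcal{X}^{(l-1)}_k$.
The backward Mamba pass runs the same operator from $N$ down to $1$, so
$\overleftarrow{\mathbf{h}}_i=\sum_{k\ge i}\bar{\mathbf{A}}_b^{\,k-i}\bar{\mathbf{B}}_b\,\mathcal{X}^{(l-1)}_k$.
Writing $\mathbf{W}_{fuse}=[\mathbf{W}_f\,\|\,\mathbf{W}_b]$, the Jacobian is
\begin{equation*}
\frac{\partial \mathbf{h}_i}{\partial \mathcal{X}^{(l-1)}_k}
=\mathbb{1}[k\le i]\,\mathbf{W}_f\bar{\mathbf{A}}^{\,i-k}\bar{\mathbf{B}}
+\mathbb{1}[k\ge i]\,\mathbf{W}_b\bar{\mathbf{A}}_b^{\,k-i}\bar{\mathbf{B}}_b .
\end{equation*}
If the selective parameters depend on the input, as in Mamba-2, there are extra terms from differentiating $\bar{\mathbf{A}}$ and $\bar{\mathbf{B}}$. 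I would handle these by taking the Jacobian at a generic point, or by assuming $\bar{\mathbf{A}}$ and $\bar{\mathbf{B}}$ are frozen locally.

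\textbf{Cases.} For $k<i$ only the forward term survives, and for $k>i$ only the backward term survives. In each of these cases the Frobenius norm is positive exactly when the product $\mathbf{W}_f\bar{\mathbf{A}}^{m}\bar{\mathbf{B}}$ (respectively its backward analogue) is nonzero. For $k=i$ both terms appear, giving $\mathbf{W}_f\bar{\mathbf{B}}+\mathbf{W}_b\bar{\mathbf{B}}_b$. Here I would invoke the non-zero-fusion-weight hypothesis, together with genericity, to exclude exact cancellation.

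\textbf{Main obstacle.} The hard step is turning ``non-degenerate'' into a precise condition that guarantees $\mathbf{W}_f\bar{\mathbf{A}}^{m}\bar{\mathbf{B}}\neq 0$ for every $m\le N-1$. My plan is to read the hypothesis as follows:
\begin{itemize}
\item $\bar{\mathbf{A}}$ is invertible. In Mamba-2, $\bar{\mathbf{A}}=e^{\Delta\mathbf{A}}$ with $\mathbf{A}$ scalar times identity, so this holds automatically and $\bar{\mathbf{A}}^{m}$ is a nonzero scalar multiple of the identity.
\item $\mathbf{W}_f\bar{\mathbf{B}}\neq 0$, which is what the assumption of a non-zero fusion projection should supply.
\end{itemize}
Under this reading $\mathbf{W}_f\bar{\mathbf{A}}^{m}\bar{\mathbf{B}}=a^{m}\mathbf{W}_f\bar{\mathbf{B}}\neq 0$, where $a$ is the scalar with $\bar{\mathbf{A}}=a\,\mathbf{I}$, and the same argument applies to the backward direction. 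For a general $\bar{\mathbf{A}}$ I would fall back on a controllability/observability-type argument. Invertibility of $\bar{\mathbf{A}}$ alone does not suffice there. Instead I would state as the non-degeneracy assumption that the Markov parameters $\mathbf{W}_f\bar{\mathbf{A}}^{m}\bar{\mathbf{B}}$ are nonzero for every $m$, and note that this set of parameters is generic, since its complement is a finite union of proper algebraic sets.

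Combining the three cases gives $\mathcal{I}_{fused}(i,k)>0$ for all pairs $(i,k)$. This gives the symmetric receptive field, in contrast with the forward-only SSM, whose Jacobian vanishes whenever $k>i$.
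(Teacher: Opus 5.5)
Your proposal takes essentially the same route as the paper's sketch, and it is correct once ``non-degenerate'' is made precise as you do. Like the paper, you unroll both directional recurrences and use the forward path term $\mathbf{W}_{\to}\bar{\mathbf{A}}^{\Delta}\bar{\mathbf{B}}$ for $k\le i$ and its backward analogue for $k\ge i$. Your precise reading is non-vanishing Markov parameters, including $\mathbf{W}_{\to}\bar{\mathbf{B}}\neq 0$, which non-zero fusion weights alone do not imply.

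You are also more careful than the paper at $k=i$. There the two directional terms add, so the paper's ``at least one path is non-zero'' step does not rule out cancellation, and your added genericity assumption is what closes that case; the same goes for your remark on input-dependent selective parameters, which the paper ignores.
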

\vspace{-3.5mm}
\begin{proof}[Proof Sketch]
% Let $\mathcal{I}(i, k) = \left\| \frac{\partial \mathbf{h}_{i}}{\partial \mathcal{X}_{k}} \right\|_F$ denote the influence Jacobian of sample $k$ on state $i$. Standard causal SSMs suffer a representational deficit where $\mathcal{I}(i, k) = 0$ for all future inputs $k > i$. Our AFBM recursively unrolls the discretized Mamba-2 dynamics (with state transition matrix $\mathbf{\bar{A}}$ and input matrix $\mathbf{\bar{B}}$) into forward ($\overrightarrow{\mathbf{h}}$) and backward ($\overleftarrow{\mathbf{h}}$) sequences. The fused representation $\mathcal{X}_{i}^{(l)}$ is derived via learned affine projections $\mathbf{W}_{\to}$ and $\mathbf{W}_{\gets}$. 

% For any row distance $\Delta = |i - k|$, the gradient propagates through the forward stream (if $k < i$) or the backward stream (if $k > i$). Since tabular data instances are intrinsically permutation-invariant (i.e., row order is arbitrary), the network dynamics force the fusion weights to balance optimally during backpropagation: $\mathbf{W}_{\to} \approx \mathbf{W}_{\gets}$. Consequently, the directional influence becomes symmetric: $\left\| \frac{\partial \mathcal{X}_{i}^{(l)}}{\partial \mathcal{X}_{i-\Delta}} \right\|_F \approx \left\| \frac{\partial \mathcal{X}_{i}^{(l)}}{\partial \mathcal{X}_{i+\Delta}} \right\|_F \approx \left\| \mathbf{W}_{\to} \mathbf{\bar{A}}^\Delta \mathbf{\bar{B}} \right\|_F$. This proves that causal bias is structurally canceled, endowing the model with an $\mathcal{O}(N^2)$ global receptive field via $\mathcal{O}(N)$ sequential operations.

AFBM unrolls the discretized Mamba-2 dynamics along the sample axis in two directions. 
Let $\bar{\mathbf{A}}$ and $\bar{\mathbf{B}}$ denote the state transition matrix and input matrix, respectively. 
The forward and backward hidden sequences are denoted by $\overrightarrow{\mathbf{h}}$ and $\overleftarrow{\mathbf{h}}$. 
The fused representation of sample $i$ is computed as
\begin{equation}
    \mathbf{h}_i
=
\mathbf{W}_{\to}\overrightarrow{\mathbf{h}}_i
+
\mathbf{W}_{\gets}\overleftarrow{\mathbf{h}}_i .
\end{equation}

For any pair of samples $(i,k)$, let $\Delta=|i-k|$. 
When $k \leq i$, sample $k$ can influence sample $i$ through the forward Mamba stream. 
Under the non-degenerate state transition assumption, this path contributes a non-zero term proportional to
$\left\|\mathbf{W}_{\to}\bar{\mathbf{A}}^{\Delta}\bar{\mathbf{B}}\right\|_F$. 
When $k \geq i$, sample $k$ can influence sample $i$ through the backward Mamba stream, contributing a non-zero term proportional to
$\left\|\mathbf{W}_{\gets}\bar{\mathbf{A}}^{\Delta}\bar{\mathbf{B}}\right\|_F$. 

Since the fusion projection assigns non-zero weights to both directional states, the corresponding active path is preserved in the fused representation. 
Thus, for every pair $(i,k)$, at least one valid directional path contributes a non-zero gradient term. 
Therefore,
\begin{equation}
\mathcal{I}_{fused}(i,k)
=
\left\|
\frac{\partial \mathbf{h}_i}{\partial \mathcal{X}^{(l-1)}_k}
\right\|_F
>0,
\end{equation}
for all $i,k \in \{1,\dots,N\}$.

\end{proof}
\vspace{-5.5mm}
\vspace{-1mm}
\section{Experiment}
\label{sec:experiment}

To evaluate the proposed \texttt{FEAT}\footnote{Model files:
\url{https://drive.google.com/file/d/1QWiolpkFdR9lQ6uGzpuMfbiIanlHHdaY/view?usp=drive_link}} framework, our experiments are designed to answer two fundamental research questions (RQs):

\begin{itemize}[leftmargin=*,labelsep=4pt,itemsep=2pt,parsep=0pt]
    \item \textbf{RQ1 (Scalability \& Efficiency):} Does the \texttt{FEAT} empirically deliver the mathematically promised $\mathcal{O}(N)$ linear scaling, and how does it compare against the $\mathcal{O}(N^2)$ memory wall of standard sequence-based structured-data models?
    
    \item \textbf{RQ2 (Predictive Parity):} Can \texttt{FEAT} maintain competitive zero-shot predictive performance across diverse real-world datasets without succumbing to the representation collapse typical of linear sequence models?
\end{itemize}

\vspace{-3mm}

\subsection{Experimental Setup}

\textbf{Baselines.} We benchmark \texttt{FEAT} against a comprehensive suite of state-of-the-art structured-data learning models, spanning modern foundation models, automated machine learning (AutoML), and classical gradient boosting. From a feature-representation perspective, these baselines are categorized as follows:

\begin{itemize}[leftmargin=*,labelsep=4pt,itemsep=2pt,parsep=0pt]
    \item \textbf{LimiX} \citep{zhang2025limix} employs a full Transformer with asymmetric self-attention to model the joint distribution of structured features. 
    % While achieving upper-bound zero-shot performance, its full-attention mechanism imposes a quadratic $\mathcal{O}(N^2)$ memory bottleneck.
    
    \item \textbf{TabICL v2} \citep{qu2026tabiclv2betterfasterscalable} is an advanced ICL foundation model that handles massive contexts via a two-stage embedding pipeline. It first captures statistical properties through distribution-aware column-wise embeddings, followed by row-wise attention to model cross-feature dependencies.
    
    \item \textbf{TabPFN Suite} \citep{hollmann2025accurate, grinsztajn2025tabpfn, garg2025realtabpfn} employs Transformer-based prior-data fitted networks (PFNs) pre-trained on synthetic structural causal models for ICL. In our experiments, we evaluate TabPFN 2.5, which supports larger feature and context dimensions, and TabPFN-Real, which further benefits from continued pre-training on real-world datasets to improve feature generalization.
    
    \item \textbf{AutoGluon v1.5} \citep{erickson2020autogluon} is a state-of-the-art AutoML framework that automates robust feature engineering (e.g., scaling, imputation) and constructs multi-layer stack ensembles of diverse traditional and deep models, representing the upper bound of classical supervised learning.
    
    \item \textbf{CatBoost} \citep{prokhorenkova2018catboost} is a gradient boosting framework optimized for categorical features. It employs an ordered boosting scheme and target statistics to prevent target leakage and prediction shift during feature conversion.
    
    \item \textbf{LightGBM} \citep{ke2017lightgbm} is an efficient tree-boosting system using gradient-based one-side sampling (GOSS) and exclusive feature bundling (EFB) to handle high-dimensional and sparse feature spaces.
    
    \item \textbf{XGBoost} \citep{chen2016xgboost} is a scalable tree-boosting framework that employs sparsity-aware split finding to automatically handle missing feature values and uses a block structure for parallel computation.
\end{itemize}

\vspace{-2mm}

\textbf{Benchmark Datasets.} To rigorously evaluate predictive parity (RQ2) and ensure our model handles diverse structured regimes, we follow the robust evaluation protocol established in recent structured-data foundation studies \citep{zhang2025limix}. We utilize 12
heterogeneous datasets from the following standard benchmarking suites:

\vspace{-2mm}

\begin{itemize}[leftmargin=*]
    \item \textbf{TabPFN Suite} (\texttt{pfn\_cls}, \texttt{pfn\_reg}) \citep{hollmann2025accurate}: A rigorous suite of small-to-medium scale datasets designed to test zero-shot in-context generalization across unseen feature spaces.
    
    \item \textbf{Tabzilla Benchmark} (\texttt{Tabzilla-CLS}) \citep{mcelfresh2023neural}: A massive, highly diverse collection of structured-data classification tasks. It serves as a comprehensive stress test for algorithmic robustness, evaluating performance across wildly varying domain origins, feature complexities, and class imbalances.
    
    \item \textbf{TALENT Benchmark} 
    (\texttt{talent\_}\allowbreak\texttt{cls}, \allowbreak \texttt{talent\_}\allowbreak\texttt{reg}) 
    \citep{liu2025talent}: A comprehensive aggregation of diverse real-world structured-data tasks, a primary testbed for feature interaction robustness.

    \item \textbf{TabArena Benchmark} 
    (\texttt{tabarena\_}\allowbreak\texttt{cls\_}\allowbreak\texttt{benchmark}) 
    \citep{erickson2025tabarena}: A modern, highly challenging evaluation suite featuring complex structural schemas, class imbalances, and mixed data types.
    
    \item \textbf{LimiX Curated Suites} (\texttt{BCCO}, \texttt{CTR23}) \citep{zhang2025limix}: To evaluate scalability and domain shift, we adopt the specific dataset partitions curated by LimiX. This includes the \textbf{BCCO Suite} for high-cardinality signal-to-noise testing, \textbf{CTR23} for massive-scale extreme sparsity recommendation environments. 
    
    \item \textbf{GI Benchmark} (\texttt{gi\_cls}, \texttt{gi\_reg}): A proprietary, real-world industrial dataset suite sourced from internal production environments. It encompasses highly heterogeneous structured data spanning diverse sectors, including finance, gaming, and consumer goods. This benchmark serves as a rigorous testbed to evaluate the model's cross-industry generalization and robustness on complex, production-grade data.

    \item \textbf{Long-Sequence Benchmark} 
    (\texttt{long\_}\allowbreak\texttt{sequence\_}\allowbreak\texttt{cls\_}\allowbreak\texttt{benchmark}, \allowbreak \texttt{long\_}\allowbreak\texttt{sequence\_}\allowbreak\texttt{reg\_}\allowbreak\texttt{benchmark}): 
    A curated suite aggregating samples with extended sequence lengths from across all benchmarks to evaluate both classification and regression performance under long-context settings.
\end{itemize}

\textbf{Metrics.} Computational efficiency (RQ1) is evaluated via end-to-end inference latency (ms). Downstream predictive performance (RQ2) is measured using the area under the receiver operating characteristic curve (AUC), accuracy (ACC), and F1-score for classification tasks. For regression tasks, performance is evaluated using the root mean squared error (RMSE) and the coefficient of determination ($R^2$ Score). All inference latency measurements and benchmarking evaluations were executed on a dedicated server equipped with a single NVIDIA H200 GPU.

\vspace{-2mm}

\subsection{Scalability and Efficiency Evaluation}
\label{subsec:exp_efficiency}
\vspace{-4mm}
\begin{table}[H]
    \centering
    \caption{Inference Latency (ms), where the best performance is marked in \textbf{bold} and `-' denotes kernel error on a H200.}
    \vspace{-3mm}
    \label{tab:inference_latency}
    % \tabcolsep 控制列与列之间的空白间距。
    % 默认通常是 6pt，这里改成 4pt 以确保 5 列数据能完美塞进单栏且不改变字体大小。
    \setlength{\tabcolsep}{4pt} 
    \begin{tabular}{rrrrr}
        \toprule
        \textbf{Context} & \multicolumn{4}{c}{\textbf{Model}} \\
        \cmidrule(lr){2-5}
        \textbf{Size} & \textbf{FEAT} & \textbf{TabICL v2} & \textbf{LimiX} & \textbf{TabPFN} \\
        \midrule
        5,000       & \textbf{103.69}   & 205.27       & 499.57   & 1749.54 \\
        10,000      & \textbf{199.49}   & 509.38       & 687.34   & 2129.98 \\
        20,000      & \textbf{386.99}   & 970.12       & 1699.65  & 4099.78 \\
        50,000      & \textbf{949.80}   & 3402.48      & 8811.47  & 13326.41 \\
        100,000     & \textbf{1900.87}  & 9140.67      & -        & - \\
        200,000     & \textbf{3794.55}  & 30683.89     & -        & - \\
        300,000     & \textbf{5677.75}  & 66299.04     & -        & - \\
        400,000     & \textbf{7577.98}  & 117101.94    & -        & - \\
        500,000     & \textbf{9477.11}  & 182895.02    & -        & - \\
        700,000     & \textbf{13271.86} & 357657.08    & -        & - \\
        1,000,000   & \textbf{18964.73} & 731520.32    & -        & - \\
        1,300,000   & \textbf{24744.13} & 1236871.66   & -        & - \\
        \bottomrule
    \end{tabular}
\end{table}
\vspace{-3mm}
We evaluate the efficiency of \texttt{FEAT} by measuring inference latency under progressively increasing context sizes (from 5,000 up to 1,300,000), while fixing the feature dimension to $D=10$. We compare \texttt{FEAT} against TabICL v2, LimiX, and TabPFN to demonstrate its scalability under extremely large context windows. 
The results in Table~\ref{tab:inference_latency} illustrate that baseline models suffer severely from the quadratic complexity of full attention. LimiX and TabPFN exhibit high latency early on and fail completely due to memory bottlenecks (kernel errors) when the context reaches 100,000. While TabICL v2 avoids early termination, its latency explodes non-linearly; at 500,000 context, it takes over 182 seconds, and at 1.3 million, the inference time severely deteriorates to over 1200 seconds. 
In stark contrast, \texttt{FEAT} maintains a highly efficient, strictly linear latency growth. Even at the massive context size of 1.3 million, \texttt{FEAT} requires only 24.7 seconds to complete inference, achieving a nearly 50$\times$ speedup over TabICL v2. This massive efficiency gain confirms that relying on standard quadratic self-attention incurs prohibitive computation and memory costs for long contexts. \texttt{FEAT} effectively overcomes this bottleneck by performing cross-sample modeling with linear-complexity operators, enabling rapid and stable inference even at a million-scale context.

\vspace{-1mm}

\subsection{Predictive Performance Evaluation}
\label{subsec:exp_performance}

\begin{table*}[t]
\centering
\vspace{-2mm}
\caption{Classification benchmark results, where the best performance is marked in bold.}
\label{tab:cls_results}
\vspace{-4mm}
\resizebox{\textwidth}{!}{%
    \begin{tabular}{@{} lccc @{\hspace{6mm}} lccc @{\hspace{6mm}} lccc @{}}
    \toprule
    \textbf{Method} & \textbf{AUC} & \textbf{ACC} & \textbf{F1} & \textbf{Method} & \textbf{AUC} & \textbf{ACC} & \textbf{F1} & \textbf{Method} & \textbf{AUC} & \textbf{ACC} & \textbf{F1} \\
    \midrule
    \multicolumn{4}{@{}c@{\hspace{6mm}}}{\textbf{Long-sequence-CLS}} & \multicolumn{4}{@{}c@{\hspace{6mm}}}{\textbf{GI-CLS}} & \multicolumn{4}{@{}c@{}}{\textbf{Tabarena-CLS}} \\
    \cmidrule(r){1-4} \cmidrule(lr){5-8} \cmidrule(l){9-12}
    \texttt{FEAT}    & \textbf{0.9725} & \textbf{0.9551} & \textbf{0.9260} & \texttt{FEAT}    & \textbf{0.9063} & \textbf{0.8698} & \textbf{0.7541} & \texttt{FEAT}    & \textbf{0.8638} & \textbf{0.8808} & \textbf{0.7178} \\
    LimiX           & 0.9598 & 0.9281 & 0.8332 & LimiX           & 0.8810 & 0.8408 & 0.7009 & LimiX           & 0.8437 & 0.8680 & 0.5922 \\
    TabICL v2       & 0.9601 & 0.9201 & 0.8691 & TabICL v2       & 0.8945 & 0.8522 & 0.6640 & TabICL v2       & 0.8548 & 0.8750 & 0.7178 \\
    TabPFN 2.5      & 0.9594 & 0.9159 & 0.8673 & TabPFN 2.5      & 0.8966 & 0.8683 & 0.7053 & TabPFN 2.5      & 0.8519 & 0.8743 & 0.7127 \\
    TabPFN 2.5 Real & 0.9588 & 0.9149 & 0.8668 & TabPFN 2.5 Real & 0.8967 & 0.8682 & 0.7045 & TabPFN 2.5 Real & 0.8525 & 0.8735 & 0.7121 \\
    AutoGluon v1.5  & 0.9534 & 0.9205 & 0.8613 & AutoGluon v1.5  & 0.8534 & 0.8627 & 0.6657 & AutoGluon v1.5  & 0.8311 & 0.8690 & 0.7057 \\
    CatBoost        & 0.9524 & 0.9076 & 0.8540 & CatBoost        & 0.8826 & 0.8539 & 0.6568 & CatBoost        & 0.8415 & 0.8685 & 0.7074 \\
    LightGBM        & 0.9189 & 0.8978 & 0.8033 & LightGBM        & 0.8707 & 0.8506 & 0.6520 & LightGBM        & 0.8313 & 0.8652 & 0.7077 \\
    XGBoost         & 0.9510 & 0.9079 & 0.8544 & XGBoost         & 0.8433 & 0.8521 & 0.6576 & XGBoost         & 0.8267 & 0.8645 & 0.7125 \\
    \midrule
    \textbf{Method} & \textbf{AUC} & \textbf{ACC} & \textbf{F1} & \textbf{Method} & \textbf{AUC} & \textbf{ACC} & \textbf{F1} & \textbf{Method} & \textbf{AUC} & \textbf{ACC} & \textbf{F1} \\
    \midrule
    \multicolumn{4}{@{}c@{\hspace{6mm}}}{\textbf{Talent-CLS}} & \multicolumn{4}{@{}c@{\hspace{6mm}}}{\textbf{Tabzilla-CLS}} & \multicolumn{4}{@{}c@{}}{\textbf{BCCO-CLS}} \\
    \cmidrule(r){1-4} \cmidrule(lr){5-8} \cmidrule(l){9-12}
    \texttt{FEAT}    & \textbf{0.8998} & \textbf{0.8444} & \textbf{0.7825} & \texttt{FEAT}    & \textbf{0.9251} & \textbf{0.8973} & \textbf{0.8515} & \texttt{FEAT}    & 0.8579 & 0.7719 & 0.6916 \\
    LimiX           & 0.8958 & 0.8419 & 0.7434 & LimiX           & 0.9235 & 0.8864 & 0.8064 & LimiX           & \textbf{0.8628} & \textbf{0.7892} & 0.7056 \\
    TabICL v2       & 0.8930 & 0.8424 & 0.7760 & TabICL v2       & 0.9176 & 0.8852 & 0.8381 & TabICL v2       & 0.8593 & 0.7877 & \textbf{0.7172} \\
    TabPFN 2.5      & 0.8978 & 0.8401 & 0.7790 & TabPFN 2.5      & 0.9119 & 0.8842 & 0.8341 & TabPFN 2.5      & 0.8527 & 0.7797 & 0.6975 \\
    TabPFN 2.5 Real & 0.8968 & 0.8393 & 0.7816 & TabPFN 2.5 Real & 0.9096 & 0.8834 & 0.8329 & TabPFN 2.5 Real & 0.8527 & 0.7789 & 0.6958 \\
    AutoGluon v1.5  & 0.8855 & 0.8369 & 0.7663 & AutoGluon v1.5  & 0.9071 & 0.8720 & 0.8233 & AutoGluon v1.5  & 0.8319 & 0.7635 & 0.6843 \\
    CatBoost        & 0.8796 & 0.8276 & 0.7623 & CatBoost        & 0.9068 & 0.8657 & 0.8198 & CatBoost        & 0.8359 & 0.7660 & 0.6928 \\
    LightGBM        & 0.8733 & 0.8240 & 0.7570 & LightGBM        & 0.8910 & 0.8640 & 0.8137 & LightGBM        & 0.8243 & 0.7503 & 0.6793 \\
    XGBoost         & 0.8704 & 0.8219 & 0.7611 & XGBoost         & 0.9029 & 0.8667 & 0.8258 & XGBoost         & 0.8222 & 0.7531 & 0.6845 \\
    \bottomrule
    \end{tabular}%
}
\vspace{-3mm}
\end{table*}

\begin{table*}[t]
\vspace{-1mm}
\centering
\caption{Regression benchmark results, where the best performance is marked in bold.}
\label{tab:reg_results}
\vspace{-4mm}
\resizebox{\textwidth}{!}{%
    \begin{tabular}{@{} lcc @{\hspace{6mm}} lcc @{\hspace{6mm}} lcc @{}}
    \toprule
    \textbf{Method} & \textbf{RMSE $\downarrow$} & \textbf{R2 $\uparrow$} & \textbf{Method} & \textbf{RMSE $\downarrow$} & \textbf{R2 $\uparrow$} & \textbf{Method} & \textbf{RMSE $\downarrow$} & \textbf{R2 $\uparrow$} \\
    \midrule
    \multicolumn{3}{@{}c@{\hspace{6mm}}}{\textbf{Long-sequence-REG}} & \multicolumn{3}{@{}c@{\hspace{6mm}}}{\textbf{BCCO-REG}} & \multicolumn{3}{@{}c@{}}{\textbf{GI-REG}} \\
    \cmidrule(lr){1-3} \cmidrule(lr){4-6} \cmidrule(l){7-9}
   
    \texttt{FEAT}                 & \textbf{0.2585} & \textbf{0.8672} & \texttt{FEAT}                 & \textbf{0.3841} & \textbf{0.7950}  & \texttt{FEAT}                 & 0.4703 & \textbf{0.6732} \\
    LimiX                & 0.2798 & 0.8393 & LimiX                & 0.4073 & 0.7750 & LimiX               & \textbf{0.4464} & 0.6514 \\
    TabICL v2            & 0.2632 & 0.8502 & TabICL v2            & 0.3863 & 0.7926  & TabICL v2            & 0.4953 & 0.6395 \\
    TablePFN v2.5        & 0.2698 & 0.8399 & TablePFN v2.5        & 0.3856 & 0.7916 & TablePFN v2.5        & 0.4745 & 0.6708 \\
    TablePFN v2.5 (Real) & 0.2734 & 0.8341 & TablePFN v2.5 (Real) & 0.3861 & 0.7912  & TablePFN v2.5 (Real) & 0.4749 & 0.6683 \\
    AutoGluon v1.5       & 0.2635 & 0.8408 & AutoGluon v1.5       & 0.4034 & 0.7788 & AutoGluon v1.5       & 0.4639 & 0.6652 \\
    CatBoost             & 0.2794 & 0.8120 & CatBoost             & 0.4147 & 0.7685 & CatBoost             & 0.5122 & 0.5977 \\
    LightGBM             & 0.2874 & 0.7060 & LightGBM             & 0.4334 & 0.7557 & LightGBM            & 0.5655 & 0.4950 \\
   XGBoost              & 0.2811 & 0.8275 & XGBoost              & 0.4521 & 0.7255 & XGBoost              & 0.5760 & 0.4477 \\
    
    \midrule
    \multicolumn{3}{@{}c@{\hspace{6mm}}}{\textbf{Talent-REG}} & \multicolumn{3}{@{}c@{\hspace{6mm}}}{\textbf{CTR23-REG}} & \multicolumn{3}{@{}c@{}}{\textbf{PFN-REG}} \\
    \cmidrule(lr){1-3} \cmidrule(lr){4-6} \cmidrule(l){7-9}
    \texttt{FEAT}                 & 0.4708 & \textbf{0.6985} & \texttt{FEAT}                 & \textbf{0.4053} & \textbf{0.7608} & \texttt{FEAT}                 & \textbf{0.5257} & \textbf{0.6266} \\
    LimiX                & 0.4873 & 0.6790 & LimiX                & 0.4383 & 0.7370 & LimiX                & 0.5385 & 0.6169 \\
    TabICL v2            & 0.4717 & 0.6963 & TabICL v2            & 0.4182 & 0.7489 & TabICL v2            & 0.5277 & 0.6213 \\
    TablePFN v2.5        & 0.4635 & 0.6905 & TablePFN v2.5        & 0.4115 & 0.7530 & TablePFN v2.5        & 0.5296 & 0.6214 \\
    TablePFN v2.5 (Real) & \textbf{0.4633} & 0.6915 & TablePFN v2.5 (Real) & 0.4108 & 0.7488 & TablePFN v2.5 (Real) & 0.5264 & 0.6259 \\
    AutoGluon v1.5       & 0.4803 & 0.6741 & AutoGluon v1.5       & 0.4273 & 0.7346 & AutoGluon v1.5       & 0.5477 & 0.5999 \\
    CatBoost             & 0.4973 & 0.6685 & CatBoost             & 0.4473 & 0.7066 & CatBoost             & 0.5732 & 0.4965 \\
    LightGBM             & 0.5058 & 0.6537 & LightGBM             & 0.4695 & 0.6496 & LightGBM             & 0.5785 & 0.4869 \\
    XGBoost              & 0.5318 & 0.6168 & XGBoost              & 0.4883 & 0.6693 & XGBoost              & 0.5904 & 0.5471 \\
    \bottomrule
    \end{tabular}%
}
\vspace{-4mm}
\end{table*}
To evaluate the predictive efficacy of \texttt{FEAT}, we analyze its performance across 12 datasets in classification and regression benchmarks, as summarized in Table~\ref{tab:cls_results} and Table~\ref{tab:reg_results}. A prominent highlight is \texttt{FEAT}'s absolute superiority in long-context scenarios, achieving the highest performance on both the \textit{Long-sequence-CLS} and \textit{Long-sequence-REG} benchmarks. This significant margin over strong baselines stems directly from how different models handle large-scale data. Traditional attention-based models are inherently constrained by an $O(N^2)$ computational bottleneck, forcing them to rely on sampling strategies when faced with massive sample sizes. This inevitably creates a context-information bottleneck where the model only processes a truncated subset of the global data distribution. In contrast, \texttt{FEAT} leverages a linear complexity of $O(N)$, allowing it to directly ingest and process the entire dataset as a single global context. By enabling full-batch, end-to-end learning without the severe information loss, \texttt{FEAT} fully exploits the long-range dependencies across the entire sample space.

Beyond long-sequence tasks, \texttt{FEAT} also demonstrates exceptional consistency and competitiveness across standard and highly sparse benchmarks, securing the highest overall AUC of 0.9251 on Tabzilla-CLS and a leading RMSE of 0.4053 on CTR23-REG. While standard linear-attention or recurrent architectures often suffer from representation degradation—commonly referred to as the linear trap—when compressing extensive sequences, \texttt{FEAT} fundamentally circumvents this vulnerability. The strength of the \texttt{FEAT} model lies in its novel integration of bidirectional mamba-2 coupled with an explicit global memory mechanism. The bidirectional mamba-2 ensures that each sample dynamically attends to both preceding and succeeding feature dependencies, while the global memory acts as a stable structural anchor that explicitly limits variance accumulation during sequential processing. This synergistic design allows \texttt{FEAT} to preserve the rich, high-fidelity expressive power characteristic of quadratic-attention models, maintaining robust predictive performance across diverse data scales while completely eliminating the $O(N^2)$ computational overhead.

\vspace{-2mm}
\subsection{Performance on Missing Value Imputation}

To evaluate the ability to reconstruct incomplete structured data, we build a missing value imputation benchmark using datasets covering both classification and regression tasks: Tabzilla-CLS, tabarena\_cls\_benchmark, CTR23, and talent\_reg\_benchmark.
We introduce an overall missing rate of 20\% to the original complete tables. To simulate realistic missingness patterns, we adopt a hybrid masking strategy at three structural levels:
(i) \textit{Cell Masking (50\%)}: randomly masking individual entries;
(ii) \textit{Stripe Masking (30\%)}: masking contiguous features or samples to simulate row-wise or column-wise corruption;
(iii) \textit{Block Masking (20\%)}: masking 2D regions to model large-area missingness.
We compare our method with LimiX, the only existing large structured-data model that supports missing value imputation. 
Performance is evaluated using Normalized Root Mean Square Error (NRMSE) for numerical features and Accuracy for categorical features.

\vspace{-2mm}
% 注意这里使用了 [H] 来强制固定位置
\begin{table}[H]
\centering
\caption{Efficiency and performance comparison between \texttt{FEAT} and LimiX. The best results are marked in bold.}
\label{tab:efficiency_comparison}
% 使用 resizebox 将表格宽度限制为当前栏宽 (\columnwidth)
\vspace{-4mm}
\resizebox{\columnwidth}{!}{%
\begin{tabular}{@{} l c c c c @{}}
\toprule
\textbf{Model} & \textbf{Avg NRMSE} $\downarrow$ & \textbf{Avg Acc} $\uparrow$ & \textbf{Avg Time (s)} $\downarrow$ & \textbf{Total Time (s)} $\downarrow$ \\
\midrule
\texttt{FEAT}  & \textbf{0.1779} & \textbf{0.5567} & \textbf{1.6} & \textbf{492.9} \\
LimiX & 0.1825          & 0.5200          & 3.6          & 1148.1 \\
\bottomrule
\end{tabular}%
}
\vspace{-3mm}
\end{table}

Table~\ref{tab:efficiency_comparison} evaluates the performance-efficiency trade-off between \texttt{FEAT} and LimiX. \texttt{FEAT} achieves superior imputation quality improving average accuracy by 7.06\% and reducing NRMSE by 2.5\%, while significantly cutting computational overhead. This is because \texttt{FEAT} leverages a linear-complexity architecture for highly efficient global context modeling, while incorporating a robust denoising pre-training strategy that forces the explicit learning of deep structural dependencies rather than shallow local correlations, significantly enhancing imputation accuracy. These results confirm that \texttt{FEAT} effectively resolves the quadratic complexity bottleneck typical of attention-based models, delivering fast and accurate imputation without sacrificing model capacity.

\vspace{-2mm}
\subsection{Performance on Feature Relation Discovery}
\label{subsec:feature_relation}

% As a newly supported capability, \texttt{FEAT} extends beyond conventional predictive tasks to offer intrinsic feature relation discovery. While SFMs are typically treated entirely as black-box predictors, \texttt{FEAT} natively exposes its learned feature representations, empowering users to understand complex inter-column dynamics directly.

% Figure~\ref{fig:relation_heatmap} illustrates the feature relation matrix extracted from the model on the \textit{user\_knowledge} dataset. By aggregating the cross-feature attention weights layer-by-layer, we construct a $D \times D$ attention matrix (where $D$ denotes the number of features). Unlike traditional statistical metrics (e.g., Pearson or Spearman correlation coefficients) that primarily capture linear relationships, this attention-based heatmap unveils complex, non-linear dependencies, directional interactions, and latent structural effects among features. The color intensity of each cell strictly indicates the strength of the learned dependency from one feature to another.

% This novel built-in interpretability is highly beneficial for modern data analysis workflows. It allows database practitioners to directly utilize the foundation model for Exploratory Data Analysis (EDA)—effectively identifying redundant attributes, profiling schema topology, and extracting data-driven priors for downstream causal discovery pipelines—all executed via a single forward pass without requiring external correlation analysis tools. 

To evaluate the interpretability of \texttt{FEAT}, we test whether its learned feature representations can reveal inter-column relationships. 
Figure~\ref{fig:relation_heatmap} shows the feature relationship matrix extracted from \texttt{FEAT} on the \textit{user\_knowledge} dataset. 
We aggregate the cross-feature attention weights across layers to construct a $D \times D$ matrix, where $D$ denotes the number of features. 
Each cell indicates the learned dependency strength between two features.

Compared with traditional statistical measures such as Pearson and Spearman correlations, the attention-based matrix can capture more flexible feature dependencies, including non-linear and directional interactions. 
This provides a built-in tool for exploratory data analysis, helping users identify redundant attributes, understand schema-level relationships, and obtain data-driven signals for downstream analysis without relying on external correlation analysis tools.

\begin{figure}[htbp]
    \centering
    \vspace{-3mm}
    % Added the figure/ path so the compiler can find your PDF
    \includegraphics[width=0.85\columnwidth]{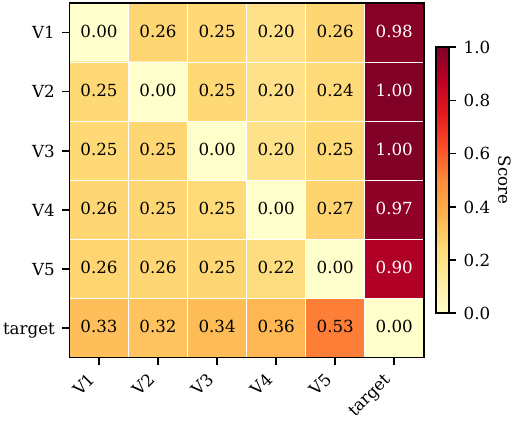}
    \vspace{-4mm}
    \caption{Feature relation matrix heatmap of \textit{user\_knowledge} dataset extracted from \texttt{FEAT}.}
    \label{fig:relation_heatmap}
    \vspace{-2mm}
\end{figure}

\vspace{-3mm}

\section{Conclusion}
\label{sec:conclusion}

% We propose \texttt{FEAT}, the first zero-shot tabular foundation model designed to achieve strict linear computational complexity ($\mathcal{O}(N)$) for massive structured datasets. It features a novel heterogeneous dual-axis architecture to overcome the representation collapse and causal biases typical of linear sequence models on non-sequential data. The framework incorporates an AFBM mechanism that enables dynamic, permutation-invariant noise filtration. Further, a Conv-GLA module provides a stable, infinite-capacity global memory to preserve long-range macroscopic feature dependencies. In addition, a heavy-tail-aware pre-training paradigm, driven by a hybrid Structural Causal Model, ensures robust optimization across highly variant empirical distributions. Extensive experiments across 11 diverse benchmark suites offer evidence that \texttt{FEAT} completely shatters the $\mathcal{O}(N^2)$ memory wall---expanding contextual capacity by $16\times$ and accelerating end-to-end inference---while matching or advancing the state-of-the-art zero-shot predictive performance of quadratic full-attention models. In future research, it is of interest to develop a multi-modal and more task of structured-data extension of \texttt{FEAT} and apply it in massive-scale industrial recommendation and financial forecasting scenarios.

In this work, we propose \texttt{FEAT}, a zero-shot foundation model designed for massive structured datasets with strict linear computational complexity $\mathcal{O}(N)$. \texttt{FEAT} is built on a heterogeneous dual-axis encoding architecture that integrates AFBM and Conv-GLA to perform efficient cross-sample modeling while preserving expressive representations. The AFBM component captures dynamic local dependencies in a permutation-invariant manner, whereas Conv-GLA introduces explicit global memory to maintain long-range interactions across large contexts. In addition, we design a heavy-tail-aware pre-training paradigm based on a hybrid SCM to ensure stable optimization under heterogeneous structured data distributions. Extensive experiments on 12 datasets from multiple classification and regression benchmarks demonstrate that \texttt{FEAT} achieves linear scalability, supports substantially larger sample contexts, and maintains competitive zero-shot predictive performance compared with strong full-attention baselines. In future work, we plan to extend \texttt{FEAT} to multi-modal structured-data settings and explore its applications in large-scale industrial scenarios such as recommendation systems and financial forecasting.

%\clearpage

\bibliographystyle{ACM-Reference-Format}
\bibliography{references}

\end{document}